\documentclass{article}

\PassOptionsToPackage{numbers, compress}{natbib}
\usepackage[preprint]{neurips_2026}

\usepackage[utf8]{inputenc} 
\usepackage[T1]{fontenc}    
\usepackage{microtype}      
\usepackage[english]{babel}

\usepackage{xcolor}
\usepackage[colorlinks=true,
            linkcolor=blue,
            citecolor=blue,
            urlcolor=blue,
            allcolors=blue]{hyperref}

\usepackage{amsmath, amsthm, amssymb, mathtools, mathrsfs, bm, bbm, nicefrac}

\usepackage{graphicx, subfigure, wrapfig, float}
\usepackage{booktabs, threeparttable, multirow}
\usepackage{caption, enumitem, url}
\usepackage{algorithm}
\usepackage{algorithmic} 

\numberwithin{equation}{section}
\newtheorem{conj}{Conjecture}
\newtheorem{thm}[conj]{Theorem}
\newtheorem{cor}[conj]{Corollary}
\newtheorem{prop}[conj]{Proposition}
\newtheorem{lemma}[conj]{Lemma}
\newtheorem{ass}{Assumption}

\newtheorem{definition}{Definition}

\theoremstyle{remark}

\newtheorem{innercustomgeneric}{\customgenericname}
\providecommand{\customgenericname}{}
\newcommand{\newcustomtheorem}[2]{%
    \newenvironment{#1}[1] 
    {%
        \renewcommand\customgenericname{#2}%
        \renewcommand\theinnercustomgeneric{##1}%
        \innercustomgeneric
    }
    {\endinnercustomgeneric}
}
\newcustomtheorem{customAss}{Assumption}

\def\x{{\bm x}}
\def\X{{\bm X}}

\def\V{\bm V}

\def\0{\bm 0}
\def\1{\mathbbm 1}
\def\b1{\bm 1}

\title{Mixing Times of Glauber Dynamics on Masked Language Models}

\author{%
Suvadip Sana\thanks{Equal contribution.}\hspace{1.5mm}\thanks{Corresponding authors: \texttt{ss2776@cornell.edu}, \texttt{levine@math.cornell.edu}.}\\
Department of Statistics and Data Science\\
Cornell University\\
\texttt{ss2776@cornell.edu} \\
\And
Sami Wolf\footnotemark[1] \\
Cornell University\\
\texttt{smw362@cornell.edu}
\And
Neer Mehta\footnotemark[1] \\
Cornell University\\
\texttt{nmm229@cornell.edu}
\And
Alina Shah\footnotemark[1] \\
Cornell University\\
\texttt{ams877@cornell.edu}
\And
Aitzaz Shaikh\footnotemark[1] \\
Cornell University\\
\texttt{ams845@cornell.edu}
\And
Janna Goodman\footnotemark[1] \\
Cornell University\\
\texttt{jsg344@cornell.edu}
\And
Lionel Levine\footnotemark[2] \\
Department of Mathematics\\
Cornell University\\
\texttt{levine@math.cornell.edu}
}

\begin{document}

\maketitle

\begin{abstract}
  Masked language models (MLMs) define local conditional distributions over tokens but do not, in general, correspond to any consistent joint distribution over sequences. This raises a fundamental question: \textit{what global distributional behavior is induced when such conditionals are used iteratively for generation?} We address this question by modeling iterative masked-token resampling as a Glauber dynamics Markov chain on the discrete space of token sequences. We first show that MLM conditionals are intrinsically incompatible: we introduce a rectangle test that certifies this incompatibility and empirically verify its prevalence across modern MLMs. We then provide a theoretical analysis of the induced Markov chain. Under bounded cross-token influence, we establish a high-temperature contraction result implying $O(n\log n)$ mixing time where $n$ is the sequence length. In contrast, we prove that under a uniform local margin condition, the chain exhibits metastability, with exponentially slow escape from semantic basins at low temperatures.
   Empirically, we demonstrate a phase transition in mixing behavior as a function of temperature and sequence length, consistent with the theoretical predictions. We further characterize the induced stationary behavior through semantic trajectories, identifying persistent structures such as long-lived traps and recurrent semantic basins, with political content serving as a measurable case study.
\end{abstract}

\section{Introduction}
\label{sec:intro}

Masked language models (MLMs) such as BERT \cite{devlin2019bert} are commonly used for natural language understanding (NLU) tasks such as generating semantically meaningful embeddings \cite{devlin2019bert}, Machine translation \cite{conneau2019cross} sentiment analysis \cite{sun2019fine}, and named entity recognition \cite{devlin2019bert}. However, the global properties of MLMs, such as model biases and structural attractors, have not been completely characterized. Existing methods for interpreting MLMs fall short of capturing global structure. Single-token masked predictions ignore multi-token dependencies. Pseudo-Log-Likelihood scoring \cite{salazar2020masked} is widely used as if conditionals define a joint distribution; we show this is unjustified. Linear probes on static embeddings \cite{li2016visualizing, tenney2019you} miss temporal phenomena entirely, such as the metastable semantic traps we document in this work.

\begin{figure}[ht]
    \centering
    \includegraphics[width=0.8\textwidth]{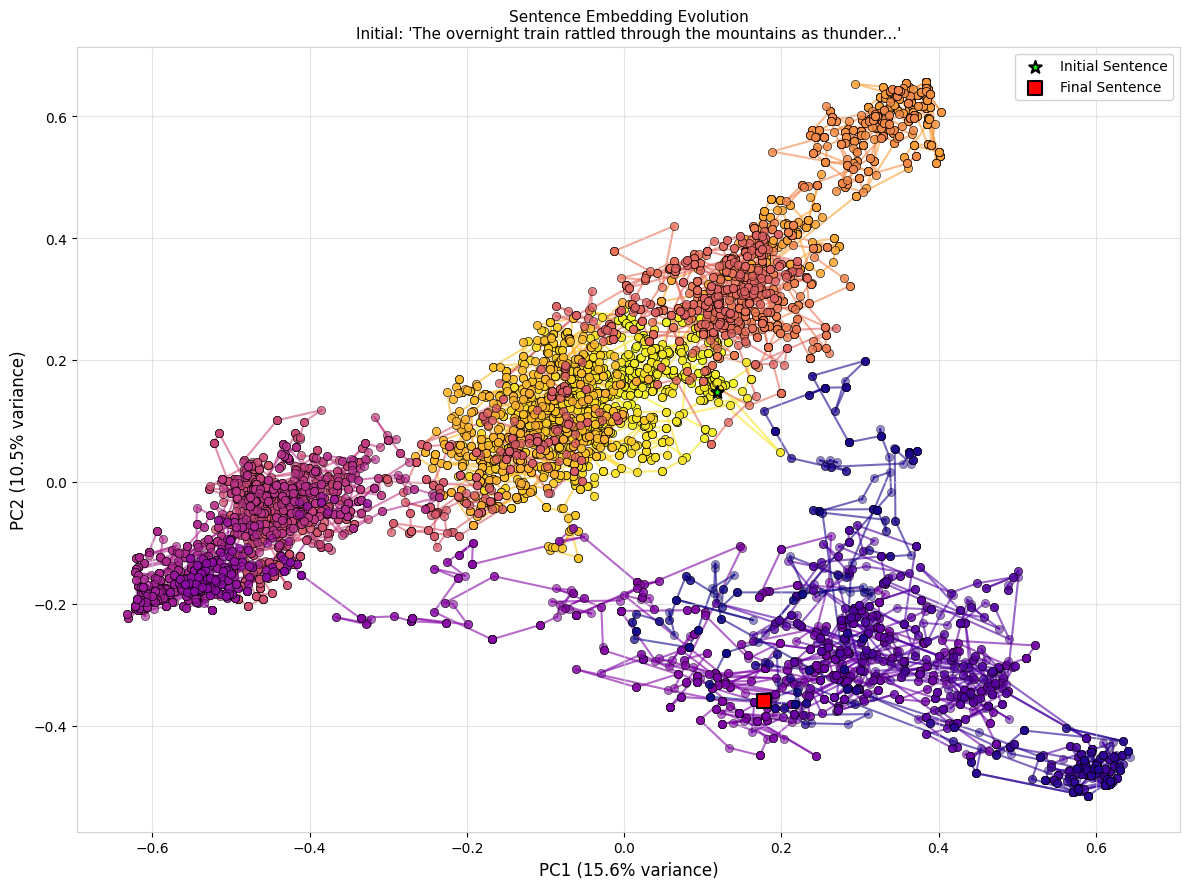}
    \caption{\textbf{Glauber dynamics on BERT exhibits metastable semantic basins.} PCA projection of sentence-embedding trajectories over 10,000 resampling steps, colored from warm (early) to cool (late). Tight clusters correspond to traps — configurations where the chain remains for hundreds to thousands of steps before escaping (\S \ref{sec:pca}). \textbf{Initial:} ``The overnight train rattled through the mountains as thunder echoed across the empty valley.'' \textbf{Final:} ``Greenville police say 2 sexual predators creeped on patients at Trinity Medical Center''}
    \label{fig:pca_10000}
\end{figure}

Mapping this landscape is necessary for reasons beyond interpretability. While MLMs are not historically used for standalone text generation, \cite{wang2019bert} showed that Gibbs sampling from these models can produce fluent text, and many modern discrete diffusion models involve MLMs during denoising \cite{li2022diffusion}. The energy landscape of the employed MLM is partially inherited by the generation process, raising the question of whether semantic biases in the MLM propagate to the outputs of these systems. As development of text-generating diffusion models continues to accelerate, understanding the hidden structure of these models will become increasingly important for fluent generation.

Glauber dynamics, a single-site Markov Chain Monte-Carlo (MCMC) method introduced to study the Ising model   \cite{glauber1963time}, provides the natural framework for this investigation: at each step, a random token position is masked and resampled from the MLM’s conditional distribution. Iterative resampling reveals two striking phenomena (Figs. \ref{fig:pca_10000},\ref{fig:phase_diagram}): chains become trapped in metastable semantic basins for thousands of steps, and the time to forget the initial state undergoes a temperature-dependent transition. We make four contributions:

\begin{enumerate}
    \item \textbf{Incompatibility.} We develop a “rectangle test” to measure the incompatibility of MLM conditionals by comparing the \textit{path-dependence} of multi-token swaps. We demonstrate that incompatibility is an intrinsic structural artifact of the pseudo-likelihood training objective: it persists across model sizes and is amplified by inter-token semantic influence. 
    \item \textbf{Mixing theory.} We establish provable, sufficient conditions for both fast and slow mixing parameterized by token influence functions. Specifically, when cross-site influence is bounded at high temperatures, we prove an $O(n \log n)$ upper bound on mixing time. Conversely, at low temperatures, we establish an exponentially slow lower bound in temperature, driven by uniform local score margins that trap the chain in metastable basins. We provide an empirically certified slow-mixing basin using a drift condition.
    \item \textbf{Phase transitions.} We empirically characterize how the boundary between fast and slow mixing depends on temperature and sequence length, validating the theoretical predictions on BERT.
    \item \textbf{Semantic landscape.} We probe the landscape induced by long-term dynamics to reveal semantic traps and recurrent basins. Using sentence embeddings, we track semantic trajectories to visualize metastable behavior. We analyze political content as a measurable case study of semantic recurrence across diverse chain initializations. We validate our approach by comparing the Glauber dynamics induced by several MLMs, including BERT \cite{devlin2019bert}, RoBERTa \cite{liu2019roberta}, and ModernBERT \cite{warner2025smarter}.

\end{enumerate}
\begin{figure}[ht]
    \centering
    \includegraphics[width=0.6\textwidth]{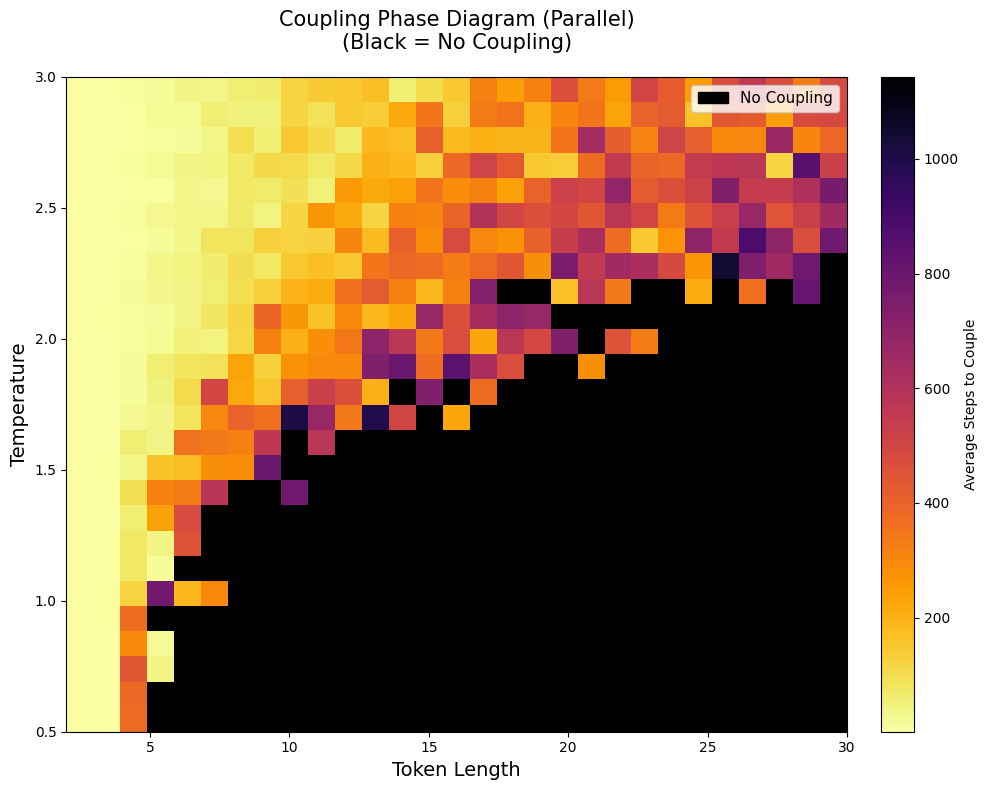}
    \caption{\textbf{A temperature-length phase transition in mixing time.} Two chains initialized from independent MS MARCO 
    passages on RoBERTa-base are evolved under maximal coupling. Color: median steps to coupling within a $10^4$-step budget. 
    Black: no coupling within budget. The slow-to-fast boundary near $\tau\approx$ 1.5-2 matches the regimes characterized in \S \ref{sec:theory}, \S \ref{sec:phase_transition}}.
    \label{fig:phase_diagram}
\end{figure}
\section{Related Work}
\label{sec:related}

\paragraph{Glauber dynamics and mixing in high-dimensional systems.}
Glauber dynamics was originally introduced in statistical physics as a stochastic single-site update process for sampling from Gibbs distributions, particularly in the Ising model \cite{glauber1963time}. At each step, one coordinate is selected and resampled conditioned on the remaining variables, producing a Markov chain whose stationary distribution corresponds to the target Gibbs measure. Theoretical analysis of Glauber dynamics and mixing times has played a central role in probability theory, statistical physics, and Markov Chain Monte Carlo methods \cite{levin2017markov}. 
\paragraph{Discrete diffusion and iterative denoising language models.}
Recent years have seen growing interest in diffusion and denoising approaches for discrete text generation. Structured discrete diffusion models \cite{austin2021structured} generalized denoising diffusion probabilistic models to discrete state spaces through categorical corruption processes. Diffusion-LM \cite{li2022diffusion} demonstrated controllable text generation using diffusion dynamics in continuous latent spaces, while DiffuSeq \cite{gong2022diffuseq} extended diffusion methods to sequence-to-sequence generation tasks. Several recent works established stronger connections between diffusion modeling and masked language modeling. DiffusionBERT \cite{he2023diffusionbert} integrated diffusion-style denoising with MLM objectives, while SEDD \cite{lou2023discrete} proposed ratio-based discrete diffusion modeling for text generation. Most closely related to our setting, masked diffusion language models \cite{sahoo2024simple} showed that iterative masked denoising objectives can serve as effective generative mechanisms for language modeling. Collectively, these works suggest that iterative local token refinement may serve as a viable alternative to autoregressive generation. Our work differs from prior diffusion-based language modeling research in that we focus not on generation quality or controllability, but on the global stochastic geometry induced by MLM conditionals. In particular, we study how repeated masked-token updates give rise to metastable semantic regions, topic persistence, and transition dynamics across sequence space.

\paragraph{Biases, probing, and representation analysis in language models.}
A large body of work investigates social biases and internal representations in language models. Early studies examined demographic and stereotypical biases in generated text by conditioning prompts on demographic identities and analyzing downstream sentiment or toxicity \cite{sheng2019woman}. Subsequent benchmarks such as CrowS-Pairs \cite{nangia2020crows}, StereoSet \cite{nadeem2021stereoset}, and RealToxicityPrompts \cite{gehman2020realtoxicityprompts} evaluated stereotypical associations and harmful generation behaviors in pretrained language models. Other lines of work focused on probing the internal structure of neural representations \cite{belinkov2022probing}. Our work differs from existing probing methodologies in two important ways. First, rather than studying static embeddings or one-step generations, we analyze the long-run stochastic dynamics induced by repeated masked-token updates. Second, instead of measuring isolated stereotypes or toxicity scores, we investigate global semantic behavior through topic-level metastability and transition structure in the induced Markov chain. This dynamical systems perspective connects language modeling with statistical physics and Markov chain theory, offering a new framework for understanding global structure and biases in masked language models.
\section{Method: Text Glauber Dynamics}
\label{sec:method}

\subsection{State space and temperature-scaled local conditionals}

Let $\V$ be a finite vocabulary and let
\(
\X_n := \V^n
\)
denote the set of token sequences of length $n$. Since $\V$ is finite, $\X_n$ is finite with $|\X_n| = |\V|^n$. When $n$ is fixed we write $\X$. For $x=(x_1,\dots,x_n)\in \X$ and $i\in[n]:=\{1,\dots,n\}$, write
\(
x_{-i} := (x_1,\dots,x_{i-1},x_{i+1},\dots,x_n).
\). We assume the masked language model induces, for each site $i$ and context $x_{-i}$, a strictly positive conditional distribution
\(
p_\theta(\cdot \mid x_{-i}) \in \Delta(\V).
\)
Given the temperature $\tau>0$, we assume there exist local scores
\(
s_i(a;x_{-i}) \in \mathbb{R}, a\in\V,
\)
that induce temperature-scaled conditionals
\[
p_{\theta,\tau}(a\mid x_{-i})
:=
\frac{\exp(s_i(a;x_{-i})/\tau)}
{\sum_{b\in\V}\exp(s_i(b;x_{-i})/\tau)}.
\]

\subsection{The Glauber Dynamics Chain}
Given $x\in\X$, define the single-site update chain with transition kernel
\[
P_\tau(x,x')
=
\frac1n \sum_{i=1}^n
\mathbf{1}_{x_{-i}=x'_{-i}} \,
p_{\theta,\tau}(x_i' \mid x_{-i}).
\]

Equivalently, from $x$ we choose a site $i$ uniformly from $[n]$ and resample $x_i$ from $p_{\theta,\tau}(\cdot\mid x_{-i})$.

\begin{prop}\label{prop:markovian}
For each $\tau>0$, $P_\tau$ is a Markov kernel on $\X$. If $p_{\theta,\tau}(a\mid x_{-i})>0$ for all $i,x_{-i},a$, then $P_\tau$ is irreducible and aperiodic.
\end{prop}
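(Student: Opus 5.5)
The plan is to verify the three properties directly from the definition of $P_\tau$. First we show that $P_\tau$ is a Markov kernel. Nonnegativity is immediate, since every summand is a product of an indicator and a softmax probability. For the row sums, fix $x$ and swap the order of summation:
\[
\sum_{x'\in\X} P_\tau(x,x') = \frac1n\sum_{i=1}^n \sum_{x'\,:\,x'_{-i}=x_{-i}} p_{\theta,\tau}(x'_i\mid x_{-i}).
\]
For fixed $i$, the map $x'\mapsto x'_i$ is a bijection between $\{x': x'_{-i}=x_{-i}\}$ and $\V$. The inner sum is therefore $\sum_{a\in\V}p_{\theta,\tau}(a\mid x_{-i})=1$, because the softmax is normalized. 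The outer average of $n$ ones then equals $1$. Finiteness of $\X$ guarantees that all these sums are finite.

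For irreducibility, fix $x,y\in\X$ and build a path that corrects one coordinate at a time. Set $z^{(0)}=x$, and let $z^{(k)}$ agree with $y$ on coordinates $1,\dots,k$ and with $x$ on the rest, so that $z^{(n)}=y$. Consecutive states $z^{(k-1)}$ and $z^{(k)}$ differ at most at site $k$. Keeping only the $i=k$ term of the sum gives
\[
P_\tau(z^{(k-1)},z^{(k)}) \ge \tfrac1n\, p_{\theta,\tau}(y_k\mid z^{(k-1)}_{-k}) > 0.
\]
This holds even when $x_k=y_k$, in which case the step is a self-loop of positive probability. Hence $P_\tau^n(x,y)>0$.

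For aperiodicity, we find a positive self-loop at every state. For any $x$, we have $P_\tau(x,x)\ge \frac1n\sum_i p_{\theta,\tau}(x_i\mid x_{-i})>0$, because every indicator equals $1$ when $x'=x$. The period of each state therefore divides $1$, and irreducibility makes the period a class property in any case. Combined with the path construction, this even gives $P_\tau^n(x,y)>0$ for all $x,y$, since we use exactly $n$ steps; so the chain is primitive.

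There is no real obstacle here. The only care needed is the bookkeeping in the row-sum computation: when $x'=x$, all $n$ indicators fire at once, and the reindexing above handles this correctly because each $i$ is summed separately. The positivity hypothesis is automatic for softmax with finite scores, but the argument only needs it as stated.
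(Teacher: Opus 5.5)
Your proof is correct and follows the paper's argument step for step: the row-sum computation with the update site fixed, the coordinate-by-coordinate path for irreducibility, and the positive self-loop $P_\tau(x,x)=\frac1n\sum_i p_{\theta,\tau}(x_i\mid x_{-i})>0$ for aperiodicity. You also observe that the path always has exactly $n$ steps, so $P_\tau^n(x,y)>0$ for all $x,y$; this primitivity is a small strengthening that the paper leaves implicit.
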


Detailed proofs for all Propositions, Lemmas, and Theorems are deferred to Appendix \ref{app:proofs}.

\begin{cor}
If $p_{\theta,\tau}(a\mid x_{-i})>0$ for all $i,x_{-i},a$, then there exists a unique stationary distribution $\mu_\tau$ on $\X$, and for every $x\in\X$,
\[
P_\tau^t(x,\cdot)\to \mu_\tau \quad \text{as } t\to\infty.
\]
\end{cor}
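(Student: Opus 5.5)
This follows from Proposition~\ref{prop:markovian} together with the convergence theorem for finite-state Markov chains \cite{levin2017markov}. The plan has three steps.

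\textbf{Step 1 (reduction).} Since $\V$ is finite, $\X=\V^n$ is a finite state space. By Proposition~\ref{prop:markovian}, the positivity hypothesis makes $P_\tau$ an irreducible and aperiodic Markov kernel on $\X$. So it suffices to invoke the standard facts for finite chains: an irreducible chain has a unique stationary distribution, and an irreducible, aperiodic chain converges to it from every starting point. For self-containedness, I would also give the short direct argument below rather than cite it as a black box.

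\textbf{Step 2 (existence and uniqueness).} Existence: the set $\Delta(\X)$ is a compact convex simplex and $\mu\mapsto\mu P_\tau$ is a continuous affine self-map of it. A fixed point then exists, either by Brouwer or, more elementarily, as a limit point of the Cesàro averages $\frac1T\sum_{t<T}\delta_xP_\tau^t$. Uniqueness: I would use the explicit structure of the kernel. From any $x$, one can reach any $y$ in exactly $n$ steps by updating sites $1,\dots,n$ in order, setting $x_i\mapsto y_i$ at step $i$. Each such step has probability at least $\frac1n\, p_{\theta,\tau}(y_i\mid\cdot)\ge \frac{\delta}{n}$, where
\[
\delta:=\min_{i,x_{-i},a}p_{\theta,\tau}(a\mid x_{-i})>0 .
\]
This minimum is positive because it is taken over a finite set of positive numbers. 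Hence $P_\tau^n(x,y)\ge \varepsilon:=(\delta/n)^n>0$ for all $x,y\in\X$.

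\textbf{Step 3 (convergence).} The uniform lower bound gives a Doeblin minorization, $P_\tau^n(x,\cdot)\ge \varepsilon|\X|\,\nu(\cdot)$ with $\nu$ uniform on $\X$. The standard coupling argument then yields
\[
\|P_\tau^{t}(x,\cdot)-\mu_\tau\|_{TV}\le (1-\varepsilon|\X|)^{\lfloor t/n\rfloor}\to 0 .
\]
This gives convergence from every $x$ and, simultaneously, uniqueness: any two stationary distributions $\mu,\mu'$ satisfy $\|\mu-\mu'\|_{TV}=\|\mu P_\tau^t-\mu' P_\tau^t\|_{TV}\to 0$, so $\mu=\mu'$.

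The only point needing care is making sure $\varepsilon|\X|\le 1$ and that positivity is uniform. Both are automatic here because $\X$ is finite. So no step is a genuine obstacle; the corollary is essentially a direct citation of the finite-state convergence theorem once Proposition~\ref{prop:markovian} is in hand.
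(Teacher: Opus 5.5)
Your proposal is correct and follows the paper's route: the paper obtains the corollary exactly as in your Step 1, by combining Proposition~\ref{prop:markovian} (irreducibility and aperiodicity on the finite space $\X$) with the standard convergence theorem for finite Markov chains. Your self-contained Doeblin argument in Steps 2--3 is a valid specialization of that theorem's usual proof. It even avoids needing aperiodicity, because it reads off $P_\tau^n(x,y)\ge(\delta/n)^n>0$ directly from the single-site structure of the kernel.
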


\section{Incompatibility of MLM Conditionals}
\label{sec:incompatibility}

We motivate Glauber dynamics as a probe for MLM structure by showing that
the relative probability of two sampling outcomes given the same context
need not match their relative prominence in the stationary distribution
$\mu_\tau$.

\begin{definition}\label{def:compatible}
A distribution $\pi$ on $\X$ is \emph{compatible} with the local conditionals
if, for every $i$, every context $x_{-i}$, and every $a \in \V$ with
$\pi(X_{-i} = x_{-i}) > 0$,
\[
  \pi(X_i = a \mid X_{-i} = x_{-i}) = p_{\theta,\tau}(a \mid x_{-i}).
\]
\end{definition}

\begin{thm}\label{thm:compatible}
If a compatible distribution $\pi$ exists, then $\pi$ is stationary for
$P_\tau$ and $P_\tau$ is reversible with respect to $\pi$.
\end{thm}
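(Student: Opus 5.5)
The plan is to prove reversibility first, i.e.\ the detailed balance equations $\pi(x)P_\tau(x,x') = \pi(x')P_\tau(x',x)$ for all $x,x'\in\X$. Stationarity then follows by summing over $x$: $\sum_x \pi(x)P_\tau(x,x') = \pi(x')\sum_x P_\tau(x',x) = \pi(x')$, since $P_\tau$ is a Markov kernel by Proposition~\ref{prop:markovian}.

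The first step is to establish full support of $\pi$, because Definition~\ref{def:compatible} only constrains conditionals on contexts with positive marginal mass. Suppose $\pi(x)>0$ for some $x$. Then $\pi(X_{-i}=x_{-i})>0$ for every $i$. Compatibility together with strict positivity of $p_{\theta,\tau}$ gives $\pi(y)>0$ for every $y$ differing from $x$ in site $i$ only. Changing one coordinate at a time connects any two sequences, so induction along such a path gives $\pi(y)>0$ for all $y\in\X$. Consequently every context has positive marginal, and for all $i$, $x_{-i}$ and $a$ we have the identity
\[
\pi\bigl((a,x_{-i})\bigr) = \pi(X_{-i}=x_{-i})\,p_{\theta,\tau}(a\mid x_{-i}).
\]
Here $(a,x_{-i})$ denotes the sequence with $a$ at site $i$.

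The second step is the detailed balance computation, split into cases.
\begin{itemize}
\item If $x=x'$, detailed balance is trivial.
\item If $x$ and $x'$ differ in two or more sites, then every indicator $\mathbf{1}_{x_{-i}=x'_{-i}}$ vanishes, so both sides are zero.
\item If they differ in exactly one site $i$, only that term of the sum survives, and it is symmetric. Writing $c=\pi(X_{-i}=x_{-i})$, we get
\[
\pi(x)P_\tau(x,x') = \tfrac1n\, c\, p_{\theta,\tau}(x_i\mid x_{-i})\,p_{\theta,\tau}(x'_i\mid x_{-i}),
\]
and the same expression for $\pi(x')P_\tau(x',x)$, since $x'_{-i}=x_{-i}$.
\end{itemize}

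The only subtle point is the support issue in the first step. Everything else is routine bookkeeping. If full support were not needed, one could restrict to the support, but the positivity argument above disposes of this cleanly.
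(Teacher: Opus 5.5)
Your proof is correct. The paper gives no proof to compare against: although it says all proofs are deferred to Appendix~\ref{app:proofs}, the appendix has no argument for Theorem~\ref{thm:compatible}. The only related reasoning is the ratio identity~\eqref{eq:ratio} and the telescoping remark after Proposition~\ref{prop:rectangle}, which rest on the same factorization you use. Your detailed-balance argument is the standard proof and fills that gap.

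Your first step (full support of $\pi$) is correct but logically redundant. The identity
\[
\pi\bigl((a,x_{-i})\bigr)=\pi(X_{-i}=x_{-i})\,p_{\theta,\tau}(a\mid x_{-i})
\]
holds for every context, not only those with positive marginal. When $\pi(X_{-i}=x_{-i})=0$, both sides vanish because $\pi\bigl((a,x_{-i})\bigr)\le \pi(X_{-i}=x_{-i})$. So your single-site detailed-balance computation goes through verbatim without knowing that $\pi$ charges every state, and the support issue is not a real subtlety here.

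Full support is still a useful fact, since it makes~\eqref{eq:ratio} meaningful for every pair of states. It also follows for free after the fact: $\pi$ is stationary for the irreducible kernel $P_\tau$ (Proposition~\ref{prop:markovian}), hence $\pi=\mu_\tau$ and has full support.
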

Compatibility implies single-mask probes read off stationary ratios directly:
for $x_a = (x_{-i}, a)$ and $x_b = (x_{-i}, b)$,
\begin{align}
  \frac{p_{\theta,\tau}(a \mid x_{-i})}{p_{\theta,\tau}(b \mid x_{-i})}
  = \frac{\pi(x_a)}{\pi(x_b)}.
  \label{eq:ratio}
\end{align}
Without compatibility, the biases revealed by single-mask probing may differ
from those exhibited under iterative generation. We now show no compatible
distribution exists for MLMs trained with a pseudo-log-likelihood objective.

\paragraph{The rectangle test.}
Fix a sequence $\x$ and two positions $i \neq j$, with current tokens
$A = x_i$, $B = x_j$ and replacements $A', B'$.  Define four states
differing only at positions $i$ and/or $j$:
\[
  \x = (\ldots,A,\ldots,B,\ldots), \;\;
  \mathbf{y} = (\ldots,A',\ldots,B,\ldots), \;\;
  \mathbf{w} = (\ldots,A,\ldots,B',\ldots), \;\;
  \mathbf{z} = (\ldots,A',\ldots,B',\ldots).
\]
Each edge has a log-probability ratio from the MLM, e.g.\
$s_{\x \to \mathbf{y}} = \log p_{\theta,\tau}(A' \mid \x_{-i})
                        - \log p_{\theta,\tau}(A  \mid \x_{-i})$.
We define the \textbf{rectangle incompatibility}
$\delta = (s_{\x\to\mathbf{y}} + s_{\mathbf{y}\to\mathbf{z}})
        - (s_{\x\to\mathbf{w}} + s_{\mathbf{w}\to\mathbf{z}})$.

\begin{prop}\label{prop:rectangle}
If the conditionals admit a compatible distribution, then $\delta = 0$ for
every rectangle.
\end{prop}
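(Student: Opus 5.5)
The plan is to show that, under compatibility, every edge score in the rectangle equals a difference of $\log\pi$ values. The two paths from $\x$ to $\mathbf{z}$ then telescope to the same quantity $\log\pi(\mathbf{z})-\log\pi(\x)$, so $\delta=0$.

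Let $\pi$ be a compatible distribution. Since $p_{\theta,\tau}$ is strictly positive, we first show that $\pi$ has full support. Take any $u\in\X$ with $\pi(u)>0$; such a $u$ exists because $\pi$ is a probability measure. Then $\pi(X_{-i}=u_{-i})>0$, and by compatibility $\pi(u_{-i},a)=\pi(X_{-i}=u_{-i})\,p_{\theta,\tau}(a\mid u_{-i})>0$ for every $a\in\V$. Changing one coordinate at a time therefore preserves positivity. Any two states are joined by a chain of at most $n$ single-site changes, so $\pi(v)>0$ for all $v$. (Alternatively, one can invoke Theorem~\ref{thm:compatible}: $\pi$ is stationary for the irreducible chain $P_\tau$ of Proposition~\ref{prop:markovian}, hence positive everywhere.) In particular, every conditioning event used below has positive probability, so Definition~\ref{def:compatible} applies to all four contexts of the rectangle.

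Next comes the edge identity, which is exactly \eqref{eq:ratio}. For the edge $\x\to\mathbf{y}$, both states share the context $\x_{-i}=\mathbf{y}_{-i}$. Hence
\[
s_{\x\to\mathbf{y}}=\log\frac{\pi(X_i=A'\mid X_{-i}=\x_{-i})}{\pi(X_i=A\mid X_{-i}=\x_{-i})}=\log\pi(\mathbf{y})-\log\pi(\x),
\]
since the marginal $\pi(X_{-i}=\x_{-i})$ cancels. The same computation handles the other edges. For $\mathbf{y}\to\mathbf{z}$, we resample position $j$ in the context $\mathbf{y}_{-j}=\mathbf{z}_{-j}$. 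For $\x\to\mathbf{w}$, we use site $j$ with context $\x_{-j}$. For $\mathbf{w}\to\mathbf{z}$, we use site $i$ with context $\mathbf{w}_{-i}$. The general form is: if states $u,v$ differ only at site $k$, then $s_{u\to v}=\log\pi(v)-\log\pi(u)$.

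Summing the edge identities along each path gives
\[
s_{\x\to\mathbf{y}}+s_{\mathbf{y}\to\mathbf{z}}=\log\pi(\mathbf{z})-\log\pi(\x)=s_{\x\to\mathbf{w}}+s_{\mathbf{w}\to\mathbf{z}},
\]
so $\delta=0$.

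The only step needing care is the positivity argument, which guarantees that the conditional probabilities in the compatibility definition are defined for each of the four contexts and that the logarithms are finite. Once positivity is established, the rest is routine telescoping.
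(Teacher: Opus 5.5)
Your proof is correct and follows the paper's argument: each edge log-ratio equals a log-ratio of $\pi$, as in \eqref{eq:ratio}, and both paths telescope to $\log\pi(\mathbf{z})-\log\pi(\x)$. Your full-support step, which the paper leaves implicit, is a worthwhile addition: it ensures that Definition~\ref{def:compatible} applies to all four contexts of the rectangle and that every logarithm is finite.
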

This must hold because each edge log-ratio equals a log-ratio of $\pi$.
Both paths from $\x$ to $\mathbf{z}$ telescope to
$\log \pi(\mathbf{z})/\pi(\x)$, so $\delta = 0$. Any $\delta \neq 0$ is therefore a certificate of incompatibility.

\paragraph{Results.}
We sample 300 random rectangles per model from MS MARCO text, with replacement tokens drawn from
each model's own top-50 predictions. As a control, we run the same test on
GPT-2: since it defines a joint distribution, each edge log-ratio is a
difference of joint log-probabilities, and $\delta = 0$ by telescoping.

\begin{table}[h]
\centering
\small
\caption{Rectangle incompatibility across models. All MLMs show significant
incompatibility ($p < 10^{-14}$); GPT-2 gives $\delta = 0$ exactly.}
\begin{tabular}{lcrr}
\toprule
Model & Type & Params & Mean $|\delta|$ \\
\midrule
BERT-base        & MLM & 110M & 0.644 \\
BERT-large       & MLM & 335M & 0.798 \\
RoBERTa-base     & MLM & 125M & 0.705 \\
RoBERTa-large    & MLM & 355M & 0.735 \\
ModernBERT-base  & MLM & 150M & 0.753 \\
\midrule
GPT-2            & AR  & 124M & 0.000 \\
\bottomrule
\end{tabular}
\label{tab:rectangle}
\end{table}

All five MLMs show incompatibility in the range $0.64$--$0.80$, persisting
across architectures and model sizes. We show in Appendix \ref{app:supp} that incompatibility $|\delta|$ has a broad distribution, proving it is the rule, not the exception (Figure \ref{fig:incompatibility_distribution}) and that it is amplified by inter-token semantic influence (Figure \ref{fig:incompatibility_influence}.) Since no compatible distribution
exists, the stationary distribution $\mu_\tau$ is an emergent object whose
properties cannot be directly inferred from the local conditionals alone.  We
characterize $\mu_\tau$ through its mixing behavior in the following sections.

\section{Mixing Bounds for Glauber Dynamics}
\label{sec:theory}
To characterize the long-term behavior of text-space Glauber dynamics, we prove that there exists an exponential-time slow-mixing regime at low temperatures and an $O(n \log n)$ fast-mixing regime at high temperatures. In this section, we parametrize each regime with token influence functions and MLM-assigned score differences, then show an empirically certified slow-mixing basin using a drift condition.

\subsection{High-temperature rapid mixing}

We now prove a sufficient condition for rapid mixing.
For probability measures $\nu,\nu'$ on $\V$, write
\[
\|\nu-\nu'\|_{\mathrm{TV}}
:=
\frac12 \sum_{a\in\V} |\nu(a)-\nu'(a)|.
\]

\begin{definition}[Influence coefficients]
For $i\neq j$, define
\[
c_{ij}(\tau)
:=
\sup
\left\{
\|p_{\theta,\tau}(\cdot\mid x_{-i})-p_{\theta,\tau}(\cdot\mid y_{-i})\|_{\mathrm{TV}}
:
x,y\in\X,\; x_k=y_k \ \forall k\neq j
\right\}.
\]
Let
\[
\alpha(\tau):=\max_{i\in[n]} \sum_{j\neq i} c_{ij}(\tau).
\]
\end{definition}

\begin{thm}[High-temperature contraction]\label{thm:contraction}
Assume $\alpha(\tau)<1$. Then for the Hamming metric
\[
d_H(x,y):=\sum_{i=1}^n \mathbf{1}_{x_i\neq y_i},
\]
there exists a coupling of the chain such that for all $x,y\in\X$,
\[
\mathbb{E}[d_H(X_1,Y_1)\mid X_0=x,Y_0=y]
\le
\left(1-\frac{1-\alpha(\tau)}{n}\right)d_H(x,y).
\]
Consequently,
\[
t_{\mathrm{mix}}(\varepsilon)
\le
\frac{n}{1-\alpha(\tau)}
\left(\log n + \log \frac1\varepsilon\right),
\]
where $t_{\mathrm{mix}}(\varepsilon)$ denotes total-variation mixing time.
\end{thm}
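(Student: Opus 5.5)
The argument is path coupling (Bubley--Dyer) combined with Dobrushin's condition; no compatibility or reversibility is needed, since incompatibility (Theorem \ref{thm:compatible} does not apply) plays no role here.

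\textbf{Reduction to adjacent pairs.} Since $d_H$ is a path metric on $\X$, in which any two states are joined by a geodesic of single-site changes, it suffices to build a one-step coupling for pairs $x,y$ differing only at one site $j$. For such a pair we aim to show
\[
\mathbb{E}[d_H(X_1,Y_1)]\le 1-\tfrac{1-\alpha(\tau)}{n}.
\]
For general $x,y$ we then compose the couplings along a geodesic $x=z^0,z^1,\dots,z^{d}=y$, with $d=d_H(x,y)$, by gluing. The triangle inequality and linearity give the contraction factor times $d_H(x,y)$. Alternatively, one can directly couple general pairs by choosing the same site $i$ and maximally coupling the two conditionals at $i$. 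In that case $\|p(\cdot\mid x_{-i})-p(\cdot\mid y_{-i})\|_{\mathrm{TV}}\le \sum_{j:x_j\neq y_j} c_{ij}$, by changing one coordinate at a time and using the triangle inequality for TV. This avoids path coupling altogether.

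\textbf{One-step computation.} Both chains pick the same uniform site $i$, and the new values at $i$ are drawn from a maximal coupling of $p_{\theta,\tau}(\cdot\mid x_{-i})$ and $p_{\theta,\tau}(\cdot\mid y_{-i})$. There are three cases.
\begin{enumerate}
\item If $i=j$, the contexts coincide, so the updates agree and the distance drops to $0$.
\item If $i\neq j$, site $j$ still disagrees, and site $i$ newly disagrees with probability equal to the TV distance, which is at most $c_{ij}(\tau)$ by definition.
\end{enumerate}
Hence
\[
\mathbb{E}[d_H]\le \tfrac{n-1}{n}+\tfrac1n\sum_{i\ne j}c_{ij}(\tau).
\]
Here we need the column sum $\sum_{i\neq j}c_{ij}$, whereas $\alpha$ is defined as a maximum row sum $\max_i\sum_{j\neq i}c_{ij}$.

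\textbf{Main obstacle: row versus column sums.} The direct general-pair coupling handles this mismatch cleanly. Summing over the disagreement set $D$, the expected new distance is
\[
|D|-\tfrac{|D|}{n}+\tfrac1n\sum_{i\notin D}\sum_{j\in D}c_{ij}.
\]
Bounding the final term again produces column sums, so the same issue appears. I would therefore either interpret the hypothesis with the index convention that makes it a column bound (re-indexing which coordinate is "influenced"), or switch to a weighted Hamming metric. Failing that, I would note that the stated bound holds with $\alpha$ replaced by $\max_j\sum_{i\ne j}c_{ij}$. I expect this index bookkeeping to be the only delicate step, and I will check the paper's convention for $c_{ij}$ against it.

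\textbf{Mixing time.} Iterating the contraction gives
\[
\mathbb{E}[d_H(X_t,Y_t)]\le (1-\tfrac{1-\alpha}{n})^t n\le n e^{-t(1-\alpha)/n}.
\]
By Markov's inequality, $\Pr(X_t\neq Y_t)\le \mathbb{E}[d_H(X_t,Y_t)]$, and the coupling inequality bounds $\|P^t(x,\cdot)-P^t(y,\cdot)\|_{\mathrm{TV}}$ by $\Pr(X_t\neq Y_t)$. Taking $Y_0\sim\mu_\tau$, where $\mu_\tau$ is the stationary distribution from the Corollary, and averaging over $y$ gives distance to stationarity at most $n e^{-t(1-\alpha)/n}$. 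This is at most $\varepsilon$ once $t\ge \frac{n}{1-\alpha}(\log n+\log\frac1\varepsilon)$.
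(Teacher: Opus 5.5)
Your coupling is the paper's own: same uniform site, a maximal coupling of the two conditionals at that site, then Bubley--Dyer path coupling. The step you flag is exactly where the paper's proof is unjustified. It writes $1-\frac1n+\frac1n\sum_{i\neq j}c_{ij}(\tau)\le 1-\frac{1-\alpha(\tau)}{n}$, which bounds the total influence \emph{of} site $j$ by $\alpha(\tau)=\max_i\sum_{j\neq i}c_{ij}(\tau)$, the maximal total influence \emph{on} a site. Your proposal still has a gap, because each fallback you list changes the hypothesis. Moreover, no proof of the one-step contraction with $\alpha$ as defined can exist, because that display is false. Take $\V=\{0,1\}$, $n\ge3$, $0<\beta<1$. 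Resample site $1$ uniformly whatever the context, and for $i\ge2$ set $p_{\theta,\tau}(x_1\mid x_{-i})=\frac{1+\beta}{2}$; this is realizable with scores $s_i(a;x_{-i})=\tau\log p_{\theta,\tau}(a\mid x_{-i})$. Then $c_{i1}(\tau)=\beta$ for $i\ge2$ and every other $c_{ij}(\tau)=0$, so $\alpha(\tau)=\beta<1$, while column $1$ sums to $(n-1)\beta$. Let $x=(1,0,\dots,0)$, $y=(0,\dots,0)$, and $f(z)=\sum_k z_k$, which is $1$-Lipschitz for $d_H$. Every coupling then satisfies
\[
\mathbb{E}[d_H(X_1,Y_1)]\ge\mathbb{E}f(X_1)-\mathbb{E}f(Y_1)=1-\frac1n+\frac{(n-1)\beta}{n}>1-\frac{1-\beta}{n}.
\]
So the first display needs the column-sum quantity $\max_j\sum_{i\neq j}c_{ij}(\tau)$ in place of $\alpha(\tau)$. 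That is your last fallback, and it is what the paper's computation actually proves. A weighted Hamming metric cannot rescue the unweighted inequality.

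The mixing-time bound, by contrast, does hold with the paper's $\alpha(\tau)$. The missing idea is to keep site-wise bounds instead of summing them into $d_H$. Run your direct general-pair coupling and let $v_t(i):=\Pr(X_t(i)\neq Y_t(i))$. Condition on $(X_t,Y_t)$ and use your hybrid estimate $\|p_{\theta,\tau}(\cdot\mid x_{-i})-p_{\theta,\tau}(\cdot\mid y_{-i})\|_{\mathrm{TV}}\le\sum_{j\neq i:\,x_j\neq y_j}c_{ij}(\tau)$, which for fixed $i$ is a \emph{row} sum. This gives
\[
v_{t+1}(i)\le\Bigl(1-\frac1n\Bigr)v_t(i)+\frac1n\sum_{j\neq i}c_{ij}(\tau)\,v_t(j)\le\Bigl(1-\frac{1-\alpha(\tau)}{n}\Bigr)\max_k v_t(k).
\]
Hence $\max_i v_t(i)\le\bigl(1-\frac{1-\alpha(\tau)}{n}\bigr)^t$, and $\Pr(X_t\neq Y_t)\le\sum_i v_t(i)\le n\,e^{-t(1-\alpha(\tau))/n}$. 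With $Y_0\sim\mu_\tau$ this is exactly the stated $t_{\mathrm{mix}}(\varepsilon)$ bound. This $\ell_\infty$ bookkeeping is the only sense in which the direct coupling ``handles the mismatch cleanly''; summing over sites, as you then do, brings the column sums back. Your general-pair formula also omits the terms $i\in D$: resampling a disagreeing site $i$ leaves it disagreeing with probability up to $\sum_{j\in D\setminus\{i\}}c_{ij}(\tau)$.
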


\subsection{A sufficient high-temperature criterion in terms of score oscillation}\label{sec:oscillation}

The previous theorem is stated in terms of influence coefficients. We now relate those coefficients to temperature.

For $i\neq j$, define the cross-site score oscillation
\[
\Delta_{ij}
:=
\sup_{\substack{x,y\in\X\\x_k=y_k,\ \forall k\neq j}}
\left[
\max_{a\in\V}\bigl(s_i(a;x_{-i})-s_i(a;y_{-i})\bigr)
-
\min_{a\in\V}\bigl(s_i(a;x_{-i})-s_i(a;y_{-i})\bigr)
\right].
\]

\begin{lemma}\label{thm:oscillation}
For all $i\neq j$ and $\tau>0$,
\[
c_{ij}(\tau) \le \frac{\Delta_{ij}}{4\tau}.
\]
\end{lemma}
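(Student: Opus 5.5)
The plan is to reduce the claim to a one-parameter estimate for softmax distributions and then take the supremum. Fix $i\neq j$ and a pair $x,y$ with $x_k=y_k$ for all $k\neq j$. Write $u(a):=s_i(a;x_{-i})/\tau$ and $v(a):=s_i(a;y_{-i})/\tau$, so that $p:=p_{\theta,\tau}(\cdot\mid x_{-i})=\mathrm{softmax}(u)$ and $q:=p_{\theta,\tau}(\cdot\mid y_{-i})=\mathrm{softmax}(v)$. Set $h:=u-v$. By definition of $\Delta_{ij}$, the oscillation satisfies $\operatorname{osc}(h):=\max_a h(a)-\min_a h(a)\le \Delta_{ij}/\tau$. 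It therefore suffices to prove the general inequality
\[
\|\mathrm{softmax}(v+h)-\mathrm{softmax}(v)\|_{\mathrm{TV}}\le \tfrac14\operatorname{osc}(h).
\]
Taking the supremum over admissible $x,y$ then gives $c_{ij}(\tau)\le\Delta_{ij}/(4\tau)$.

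To prove this inequality, interpolate along $v_t:=v+th$ for $t\in[0,1]$, with $p_t:=\mathrm{softmax}(v_t)$. Then $\frac{d}{dt}p_t(a)=p_t(a)\bigl(h(a)-\mathbb{E}_{p_t}h\bigr)$. It follows that
\[
\|p_1-p_0\|_{\mathrm{TV}}\le \int_0^1 \tfrac12\sum_a p_t(a)\bigl|h(a)-\mathbb{E}_{p_t}h\bigr|\,dt=\int_0^1 \tfrac12\,\mathbb{E}_{p_t}\bigl|h-\mathbb{E}_{p_t}h\bigr|\,dt .
\]
The key step is the bound on the mean absolute deviation of a random variable supported in an interval of length $L=\operatorname{osc}(h)$. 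Write $m=\mathbb{E}h$ and $[\,\min h,\max h\,]=[m-\beta,m+\gamma]$ with $\beta+\gamma=L$. Since $|h-m|$ is convex in $h$, its expectation is maximized by a two-point law at the endpoints with mean $m$. That law gives
\[
\mathbb{E}|h-m|\le \frac{2\beta\gamma}{\beta+\gamma}\le \frac{L}{2}.
\]
Hence the integrand is at most $L/4$, and integrating over $t\in[0,1]$ yields the claim.

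The only delicate point is getting the constant $1/4$ rather than $1/2$. This requires the mean-absolute-deviation bound $L/2$, not the trivial bound $L$. The argument uses convexity and the two-point extremal law, so it is exactly where care is needed.

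Shift invariance of softmax explains why the oscillation, and not the sup-norm, of the score difference is the right quantity. It is consistent with the argument, since adding a constant to $h$ changes neither side.

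An alternative route is to bound TV via the likelihood ratio: $q/p=e^{-h}/\mathbb{E}_p e^{-h}$ lies in $[e^{-L},e^{L}]$, giving $\mathrm{TV}\le\tanh(L/4)\le L/4$. I would keep this as a cross-check, but the interpolation argument is cleaner.
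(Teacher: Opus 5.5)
Your proposal is correct and follows the same reduction as the paper: absorb $1/\tau$ into the logits $u,v$, bound the TV distance between the two softmax distributions by one quarter of the oscillation of $u-v$ (which is at most $\Delta_{ij}/\tau$), and take the supremum over admissible pairs $x,y$. The paper only invokes this softmax estimate as a ``standard'' Lipschitz bound, whereas your interpolation argument with the mean-absolute-deviation bound $\mathbb{E}|h-m|\le 2\beta\gamma/(\beta+\gamma)\le L/2$ actually proves it; your likelihood-ratio cross-check giving $\tanh(L/4)\le L/4$ would also work.
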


\begin{cor}
If
\[
\max_i \sum_{j\neq i} \Delta_{ij} < 4\tau,
\]
then $\alpha(\tau)<1$ and hence
\[
t_{\mathrm{mix}}(\varepsilon)
\le
\frac{n}{1-\alpha(\tau)}
\left(\log n+\log\frac1\varepsilon\right).
\]
\end{cor}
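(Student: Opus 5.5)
The corollary follows by combining Lemma~\ref{thm:oscillation} with Theorem~\ref{thm:contraction}; no new estimate is needed.

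\emph{Step 1: bound $\alpha(\tau)$.} Fix $i\in[n]$. Apply Lemma~\ref{thm:oscillation} to each $j\neq i$, which gives $c_{ij}(\tau)\le \Delta_{ij}/(4\tau)$. Summing over $j\neq i$ yields
\[
\sum_{j\neq i} c_{ij}(\tau)\le \frac{1}{4\tau}\sum_{j\neq i}\Delta_{ij}.
\]
Taking the maximum over $i$ on both sides gives
\[
\alpha(\tau)\le \frac{1}{4\tau}\max_i\sum_{j\neq i}\Delta_{ij}.
\]
The hypothesis $\max_i\sum_{j\neq i}\Delta_{ij}<4\tau$ then gives $\alpha(\tau)<1$.

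\emph{Step 2: apply the contraction theorem.} Since $\alpha(\tau)<1$, the hypothesis of Theorem~\ref{thm:contraction} holds. Its conclusion is exactly the stated bound
\[
t_{\mathrm{mix}}(\varepsilon)\le \frac{n}{1-\alpha(\tau)}\left(\log n+\log\frac1\varepsilon\right).
\]

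\emph{Remarks.} Both sides of each inequality are finite: $\V$ and $\X$ are finite, so each supremum defining $\Delta_{ij}$ is a maximum over finitely many pairs and is finite. Also $\Delta_{ij}\ge 0$, since a maximum minus a minimum over the same set is nonnegative. This keeps the summation and maximization steps valid. If a fully explicit bound is wanted, one can additionally use
\[
\frac{1}{1-\alpha(\tau)}\le \frac{1}{1-\max_i\sum_{j\neq i}\Delta_{ij}/(4\tau)},
\]
but the corollary as stated does not require this.

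There is no real obstacle here. The only point needing care is that the maximum over $i$ commutes correctly with the termwise bound, which holds by monotonicity of $\max$.
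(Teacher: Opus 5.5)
Your proposal is correct and matches the paper's intended derivation (the paper states the corollary without a separate proof): summing Lemma~\ref{thm:oscillation} over $j\neq i$ and maximizing over $i$ gives $\alpha(\tau)\le \frac{1}{4\tau}\max_i\sum_{j\neq i}\Delta_{ij}<1$, and Theorem~\ref{thm:contraction} then yields the mixing bound. One caution about the theorem you cite: the paper's path-coupling proof of Theorem~\ref{thm:contraction} needs the column sums $\sum_{i\neq j}c_{ij}(\tau)$ to be at most $\alpha(\tau)$, which the row-sum definition of $\alpha(\tau)$ does not guarantee; the mixing-time bound itself still holds under $\alpha(\tau)<1$ by the standard Dobrushin argument that contracts $\max_i \mathbb{P}(X_t^i\neq Y_t^i)$, so your corollary stands.
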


\paragraph{On tightness.} The contraction theorem requires $\alpha(\tau) < 1$, a worst-case condition over both contexts and update sites, dominated by adversarial coherent contexts where BERT's conditionals are sharply peaked. Direct estimation of $\alpha(\tau)$ on BERT (Appendix A.1) gives $\alpha(10) \approx 1.04$ and $\alpha(11) \approx 0.94$, so the sufficient condition requires at least $\tau>10$. This is an order of magnitude above where BERT produces coherent text and where empirical mixing is fast (§6.1). The contraction argument is therefore not tight at natural temperatures.

\subsection{Low-temperature metastability and slow mixing}

We now give a rigorous slow-mixing theorem under a local stability assumption.

For each site $i$ and context $x_{-i}$, let
\[
m_i(x_{-i}) \in \arg\max_{a\in\V} s_i(a;x_{-i})
\]
be a choice of score maximizer.

Let $B\subseteq \X$ be a set of states, to be interpreted as a metastable basin.

\begin{ass}[Uniform local margin on a basin]\label{ass:margin}
There exists $\Delta_\star>0$ such that for every $x\in B$ and every site $i$ for which changing $x_i$ would move the chain outside $B$, one has:
\[
x_i = m_i(x_{-i}),
\]
and
\[
s_i(x_i;x_{-i}) - s_i(a;x_{-i}) \ge \Delta_\star
\qquad
\text{for all } a\neq x_i \text{ with } x^{(i\to a)}\notin B.
\]
Here $x^{(i\to a)}$ denotes the sequence obtained from $x$ by replacing coordinate $i$ by $a$.
\end{ass}

\begin{thm}[Low-temperature exponentially small escape]\label{thm:exponential}
Assume the uniform local margin condition on a basin $B$. Then for every $x\in B$,
\[
P_\tau(x,B^c)
\le
|\V| e^{-\Delta_\star/\tau}.
\]
Consequently the conductance of $B$ satisfies
\[
\Phi_\tau(B)
:=
\frac{\sum_{x\in B}\mu_\tau(x)P_\tau(x,B^c)}{\mu_\tau(B)}
\le
|\V| e^{-\Delta_\star/\tau}.
\]
If additionally $\mu_\tau(B)\le \frac12$, then
\[
t_{\mathrm{mix}}(1/4)
\ge
\frac{1}{4|\V|} e^{\Delta_\star/\tau}.
\]
\end{thm}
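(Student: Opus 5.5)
The plan has three steps, in order: a pointwise bound on the one-step escape probability, an averaging step that turns it into a conductance bound, and the standard bottleneck-ratio argument for the mixing-time lower bound.

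\textbf{Step 1: pointwise escape bound.} Fix $x\in B$ and expand the kernel:
\[
P_\tau(x,B^c)=\frac1n\sum_{i=1}^n\sum_{a:\,x^{(i\to a)}\notin B} p_{\theta,\tau}(a\mid x_{-i}).
\]
If site $i$ has no escaping replacement $a$, the inner sum is empty. Otherwise Assumption~\ref{ass:margin} applies at $(x,i)$. For each escaping $a$ it gives $s_i(a;x_{-i})\le s_i(x_i;x_{-i})-\Delta_\star$. Bounding the softmax denominator below by its single term $b=x_i$ yields
\[
p_{\theta,\tau}(a\mid x_{-i})\le \exp\bigl((s_i(a;x_{-i})-s_i(x_i;x_{-i}))/\tau\bigr)\le e^{-\Delta_\star/\tau}.
\]
There are at most $|\V|-1\le|\V|$ escaping tokens $a$, so each inner sum is at most $|\V|e^{-\Delta_\star/\tau}$. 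Averaging over $i$ with weight $1/n$ keeps the same bound. Note that only the margin inequality is used; the maximizer condition $x_i=m_i(x_{-i})$ is implied by it and not needed separately.

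\textbf{Step 2: conductance.} $\Phi_\tau(B)$ is a $\mu_\tau$-weighted average over $x\in B$ of $P_\tau(x,B^c)$, with weights $\mu_\tau(x)/\mu_\tau(B)$. Step 1 bounds every term uniformly, so the average obeys the same bound. Here $\mu_\tau(B)>0$ is needed for the ratio to be defined, and it holds because $\mu_\tau$ is the unique, fully supported stationary law from the Corollary to Proposition~\ref{prop:markovian}.

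\textbf{Step 3: mixing time.} This is the only step needing care, since the chain is not reversible (Section~\ref{sec:incompatibility}). So I would use the bottleneck argument of Levin--Peres (Theorem 7.4), which holds for any chain with stationary law $\mu_\tau$ and gives $t_{\mathrm{mix}}(1/4)\ge 1/(4\Phi_\tau(B))$ when $\mu_\tau(B)\le 1/2$. To confirm reversibility is unnecessary, I would rederive it directly.

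Start the chain from $\mu_B:=\mu_\tau(\cdot\mid B)$. By stationarity, $\mu_\tau P_\tau(B^c)=\mu_\tau(B^c)$. Splitting $\mu_\tau$ into its restrictions to $B$ and $B^c$ then shows that the one-step mass leaving $B$ equals the mass entering $B$, so only the flux out of $B$ matters and no reversibility is used. Precisely,
\[
\|\mu_B P_\tau - \mu_B\|_{\mathrm{TV}}\le \mu_B P_\tau(B^c)=\Phi_\tau(B).
\]
Iterating and using the triangle inequality together with the fact that $P_\tau$ contracts total variation gives $\|\mu_BP_\tau^t-\mu_B\|_{\mathrm{TV}}\le t\,\Phi_\tau(B)$. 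On the other hand, $\|\mu_B-\mu_\tau\|_{\mathrm{TV}}\ge \mu_\tau(B^c)\ge 1/2$. Combining these, at $t=t_{\mathrm{mix}}(1/4)$ we get $1/2-t\Phi_\tau(B)\le 1/4$, since the worst-case distance from a point start bounds the distance from the mixture start $\mu_B$. Hence $t_{\mathrm{mix}}(1/4)\ge 1/(4\Phi_\tau(B))\ge e^{\Delta_\star/\tau}/(4|\V|)$.
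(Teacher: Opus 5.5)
Your proposal is correct and follows the paper's proof step for step. It uses the same lower bound of the softmax denominator by the single term $b=x_i$, the same sum over at most $|\V|-1$ exit-causing tokens and average over sites, the same $\mu_\tau$-averaging for the conductance, and the same bottleneck bound $t_{\mathrm{mix}}(1/4)\ge 1/(4\Phi_\tau(B))$; the paper simply cites that bound, while you rederive it along the lines of Levin--Peres and rightly note that reversibility is not needed.

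There are two harmless slips. First, the margin inequality only constrains exit-causing tokens, so it does not imply $x_i=m_i(x_{-i})$, although you are right that this condition is never used. Second, the bound $\|\mu_B P_\tau-\mu_B\|_{\mathrm{TV}}\le\Phi_\tau(B)$ does not come from the balance of flux in and out of $B$. It comes from the pointwise inequality $\sum_{x\in B}\mu_\tau(x)P_\tau(x,y)\le\mu_\tau(y)$ for $y\in B$, which is immediate from stationarity and makes $\mu_B P_\tau-\mu_B$ nonpositive on $B$.
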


Note that if $\Delta_\star = O(n)$, then we will have exponential mixing in $n$ where $n$ is the length of the sequence. These are structural theorems about the induced text-space dynamics. Whether a specific masked language model such as BERT satisfies the hypotheses at a given temperature is an empirical question that we defer to Appendix \ref{app:measurements}.

\subsection{An empirically certified slow-mixing basin}

Theorem~\ref{thm:exponential} certifies slow mixing under a uniform local score-margin hypothesis. We now exhibit a concrete basin for which a related but distinct condition, positive drift on the boundary, can be verified empirically, providing an alternative slow-mixing certificate.

Define $B = \{x : \#\{i : x_i = \text{`!'}\} \geq 0.9N\}$ for $N = 100$. We sample 200 states with exactly $0.9N$ `!' tokens in random positions and compute the expected one-step change in `!'-count under the Glauber kernel,
\begin{equation*}
    \delta_\tau(x) \;:=\; \frac{1}{N}\!\left[\sum_{i:\, x_i \neq \text{`!'}} p_{\theta,\tau}(\text{`!'} \mid x_{-i}) \;-\; \sum_{i:\, x_i = \text{`!'}} \bigl(1 - p_{\theta,\tau}(\text{`!'} \mid x_{-i})\bigr)\right].
\end{equation*}
A positive $\delta_\tau(x)$ indicates the `!'-count process is, in expectation, pulled deeper into $B$ at $x$.

\begin{table}[h]
\centering
\small
\caption{Minimum and median one-step `!'-count drift $\delta_\tau$ across 200 boundary samples of $B$.}
\label{tab:bert_drift}
\begin{tabular}{lrr}
\toprule
\textbf{Temperature} & \textbf{Min.\ $\delta_\tau$} & \textbf{Median $\delta_\tau$} \\
\midrule
$\tau = 0.5$ & 0.042 & 0.080 \\
$\tau = 1.0$ & 0.024 & 0.066 \\
\bottomrule
\end{tabular}
\end{table}

All 200 samples exhibit strictly positive drift at both temperatures. Since the count $X_t = \#\{i : x_{t,i} = \text{`!'}\}$ changes by at most one per Glauber step, $X_t$ is a strict submartingale at the boundary of $B$, and a standard drift argument yields escape times exponential in $N$, certifying slow mixing for this basin. This is a drift-based certificate, separate from the conductance-based bound of Theorem~\ref{thm:exponential} which requires a uniform local score margin.
\section{Results}
\label{sec:results}
\subsection{Mixing-Time Crossover}
\label{sec:phase_transition}

We assess how mixing behavior depends on temperature and sequence length 
using two complementary diagnostics. At high temperatures, we run Glauber 
dynamics on BERT-base-uncased starting from MS MARCO passages and measure 
semantic drift via cosine distance $d(x_t, x_0) = 1 - \cos(\phi(x_t), \phi(x_0))$ 
in Jina Embeddings v4 space (Appendix~\ref{app:details}); we use the hitting 
time to $d = 1.0$ as our mixing diagnostic. The threshold $1.0$ sits below 
saturation in the fast-mixing regime (Figure~\ref{fig:mixing_evidence}a) and 
is rarely reached at low temperatures within our step budget; the qualitative 
shape of the results is robust to threshold choice in $[0.4, 1.0]$.

At sufficiently high temperatures ($\tau \gtrsim 1$), embedding distance 
grows logarithmically with sequence length at fixed time, and hitting times 
scale as $C(\tau)\, n \log n$ — consistent with the contraction theorem 
(Theorem~\ref{thm:contraction}). Figure~\ref{fig:mixing_evidence} documents 
this scaling at $\tau = 1.0$ (left, embedding saturation) and $\tau = 1.4$ 
(right, $n \log n$ fit to hitting times). The empirical fast-mixing boundary sits below the contraction threshold; see \ref{sec:oscillation}.

\begin{figure}[ht]
    \centering
    \subfigure[Embedding distance $d(x_T, x_0)$ vs.\ sequence length at $\tau = 1.0$, $T = 10^4$ steps (100 trials per length). Distance saturates logarithmically, consistent with fast mixing.]{
        \includegraphics[width=0.48\textwidth]{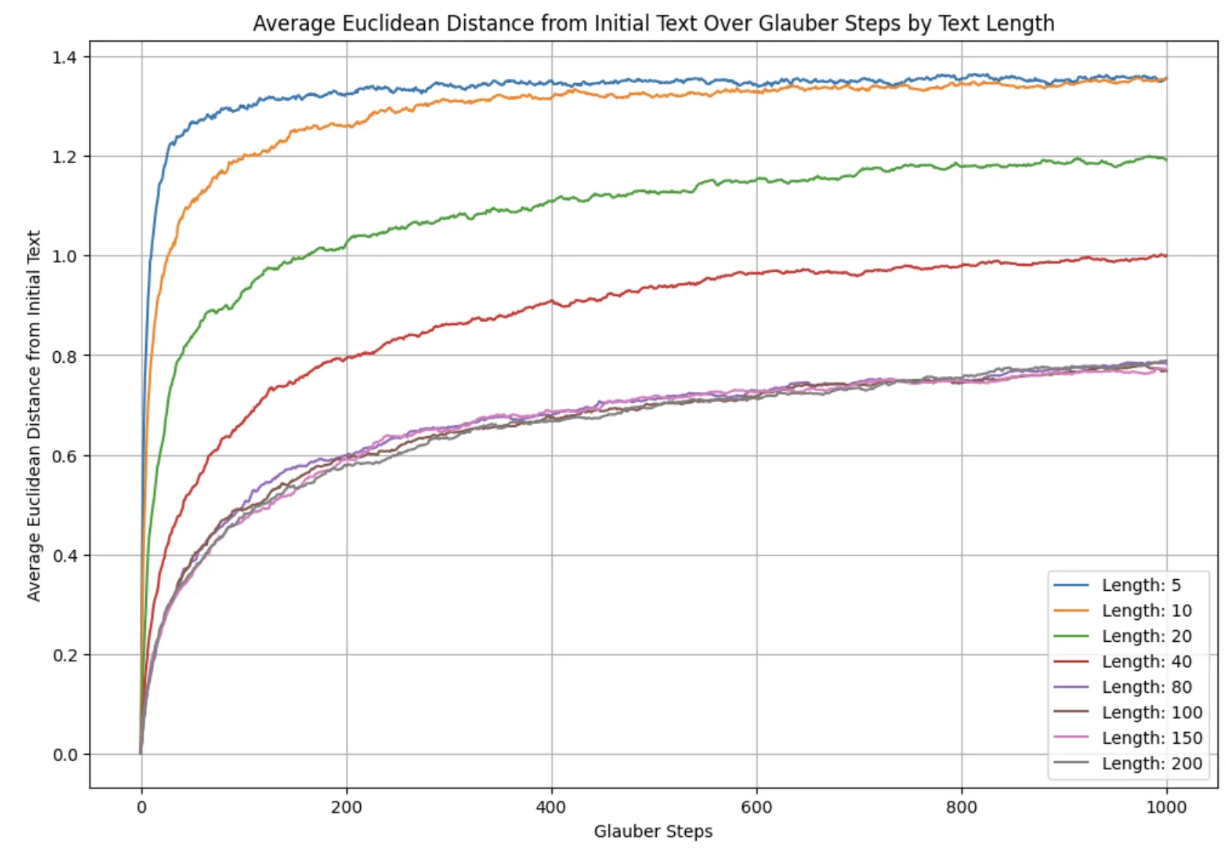
}
    }
    \hfill
    \subfigure[$C(\tau)\, n \log n$ fit to median hitting times at $\tau = 1.4$ (50 trials per length, cutoff 1000 steps). The fit is tight across all tested lengths.]{
        \includegraphics[width=0.48\textwidth]{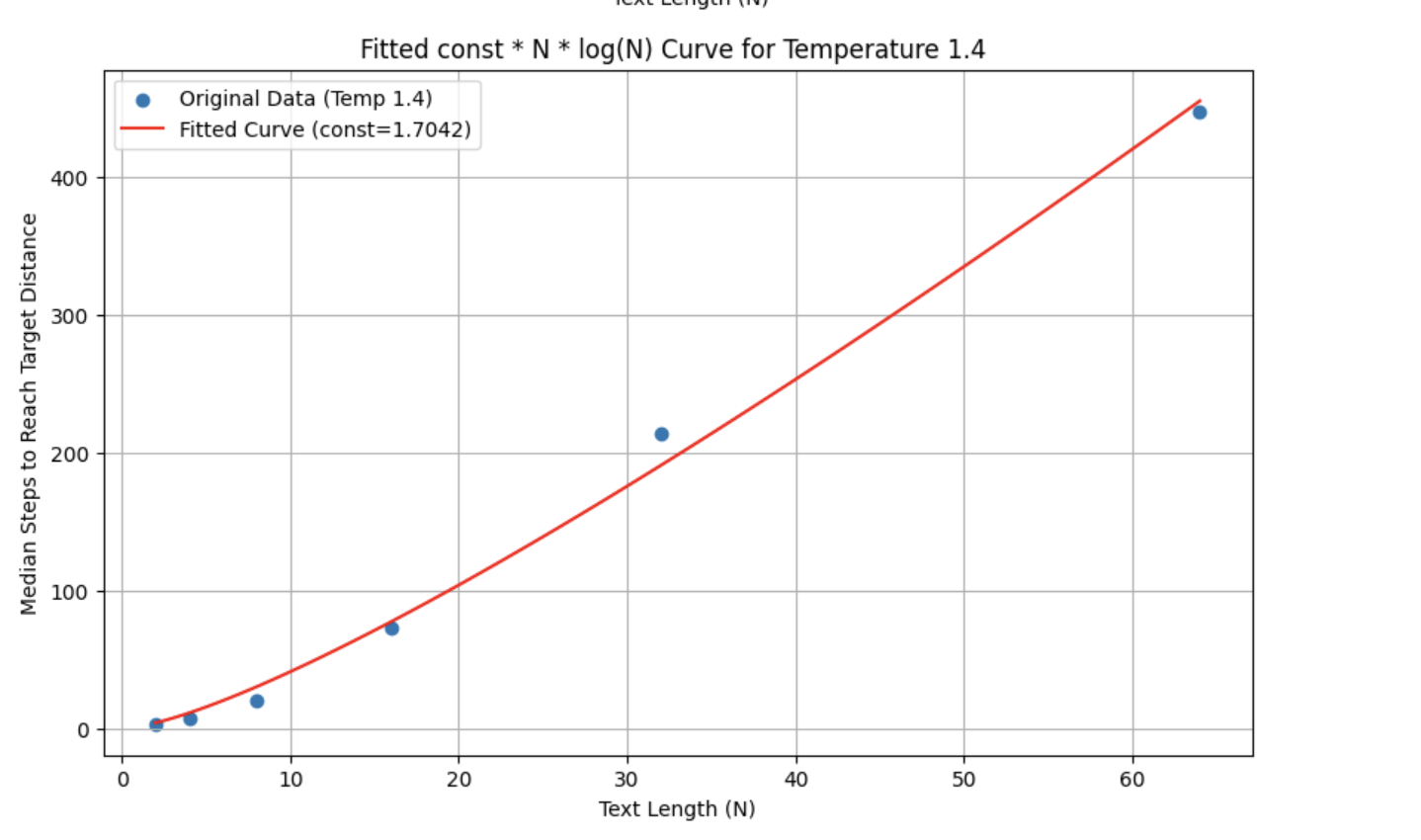}
    }
    \caption{Evidence for $C(\tau)\, n \log n$ mixing at high temperature on 
    BERT-base-uncased.}
    \label{fig:mixing_evidence}
\end{figure}

We further probe the mixing-time dependence on temperature and sequence length with a coupling mechanism. For each pair $(\tau,n)$ two chains initialized from independent MS MARCO passages are evolved under a maximal coupling at the same site, and we record the first step at which they agree. The transition from no-coupling-within-budget to 
fast coupling is centered around $\tau \approx 1.5$--$2$, with the boundary 
shifting to higher $\tau$ as $n$ grows. The mixing-time crossover qualitatively matches the regimes characterized in Section \ref{sec:theory}. However, the complex interactions of natural language induced by the MLM create a smoother transition boundary than exists in the physical Ising model for high $n$; see Section \ref{sec:limitations}. 

\subsection{Recurrent Semantic Basins in Long-Run Dynamics}
\label{sec:drift}

Long Glauber trajectories reveal a recurring phenomenon: chains often lose their initial
semantic identity and move through multiple semantically coherent
regions of text space. We refer to these regions as semantic basins.
Some basins are short-lived, while others persist for many updates or
are revisited repeatedly.

In this section, we use political content as a case study of such a
basin. This lets us ask
whether long-run masked-token dynamics merely diffuse away from the
initial prompt, or whether they repeatedly enter recognizable semantic
regions.

\subsubsection{Politics as a Recurrent Basin in BERT}

We ran 50 Glauber chains on \texttt{bert-base-uncased}
(5 topics $\times$ 10 seeds) for 10{,}000 steps at
$\tau=1.0$, recording topic scores every 200 steps with the
\texttt{facebook/bart-large-mnli} zero-shot classifier over nine
candidate labels. We report mean politics scores and original seed-topic
scores, which measure recurrence into a political semantic region and
loss of initialization-topic identity, respectively. The uniform-label
baseline is $1/9 \approx 0.111$.

\vspace{-0.5em}
\begin{table}[H]
\centering
\scriptsize
\caption{BERT political-topic recurrence and loss of initialization-topic
identity. Left: mean politics score at step 0 vs.\ step 10{,}000. Right:
politics score compared with the original seed-topic score. Baseline
$=0.111$.}
\label{tab:bert_drift_and_seed_loss}

\begin{minipage}{0.46\linewidth}
\centering
\textbf{(a) Politics score}\\[1mm]
\begin{tabular}{lrrr}
\toprule
\textbf{Topic} & \textbf{@0} & \textbf{@10k} & \textbf{$\Delta$} \\
\midrule
Health     & 0.008 & 0.044 & $+$0.036 \\
Sport      & 0.011 & 0.036 & $+$0.025 \\
Science    & 0.003 & 0.056 & $+$0.053 \\
Food       & 0.003 & \textbf{0.174}$\checkmark$ & $+$0.171 \\
Technology & 0.004 & 0.038 & $+$0.034 \\
\bottomrule
\end{tabular}
\end{minipage}
\hfill
\begin{minipage}{0.52\linewidth}
\centering
\textbf{(b) Politics vs.\ seed topic}\\[1mm]
\begin{tabular}{lrrrr}
\toprule
\textbf{Topic} & \textbf{Pol@0} & \textbf{Pol@10k} &
\textbf{Seed@0} & \textbf{Seed@10k} \\
\midrule
Health     & 0.008 & 0.044 & 0.631 & 0.116 \\
Sport      & 0.011 & 0.036 & 0.763 & 0.204 \\
Science    & 0.003 & 0.056 & 0.682 & 0.068 \\
Food       & 0.003 & 0.174 & 0.834 & 0.037 \\
Technology & 0.004 & 0.038 & 0.846 & 0.152 \\
\bottomrule
\end{tabular}
\end{minipage}
\end{table}

Every topic shows an increase in mean politics score. Food is the only
topic whose endpoint mean crosses the baseline, driven by two chains
ending in strongly political regions with scores $0.876$ and $0.497$.
However, endpoint scores understate the recurrence effect: 46 of 50
chains cross the politics baseline at least once, with transient peaks
as high as $0.979$.

As a robustness check, DeBERTa-v3 scoring gives the same qualitative
pattern: politics scores increase for all five topics, and 48 of 50
chains visit above-baseline political regions. Meanwhile, seed-topic
scores fall sharply in every case, showing that chains lose their
initial topical identity rather than remaining near the starting
semantic class.

We also compare endpoint political-content recurrence across
\texttt{bert-base-uncased}, \texttt{roberta-base}, and
\texttt{ModernBERT-base}; full endpoint values are reported in
Appendix~\ref{app:cross_model_drift}. RoBERTa shows the strongest
endpoint recurrence in this preliminary comparison, with two topics
crossing the baseline, while ModernBERT shows the weakest endpoint
recurrence, with no topics crossing the baseline. The observed
long-lived content also differs by model: RoBERTa chains tend to settle
into news-like or multilingual political fragments, while ModernBERT
chains more often enter instruction-style prompts or code-like regions.

\subsubsection{Interpretation}

These results are best understood as evidence for metastable semantic
basins rather than directed movement toward politics. Once chains lose
their initial topical identity, they explore multiple basins in the
model-induced text space. Political content is one interpretable basin
that is often reached, but it is not an absorbing endpoint of the
dynamics.

Persistence depends on a basin's effective depth: regions with larger
local score margins or fewer likely escape moves are harder to leave.
Thus, high transient politics scores and occasional political endpoints
suggest recurrent entry into a political semantic region. More broadly, Glauber dynamics
serves as a diagnostic for the semantic basin structure induced by
masked language models.

\section{Discussion, Conclusion and Limitations}
\label{sec:limitations}
\paragraph{Conclusion.} We have studied the Markov chain induced by iterative masked-token resampling
in masked language models through the lens of Glauber dynamics. The rectangle
test certifies that MLM conditionals are fundamentally incompatible with any
joint distribution, a structural consequence of pseudo-log-likelihood training
that persists across architectures and scales. Under bounded cross-token
influence we establish an $O(n \log n)$ mixing time upper bound; under a uniform
local margin condition we establish an exponentially slow lower bound, and both
regimes are empirically supported on BERT and RoBERTa. Long-run chains exhibit metastable trap states lasting hundreds to
thousands of steps and recurrent entry into semantic basins, with
political content serving as a measurable case study of such recurrence. Unlike
single-mask probing or pseudo-log-likelihood scoring, the dynamical framework
exposes global semantic structure that is invisible to existing methods, providing a new framework for understanding global structure and biases in masked language models.

\paragraph{Limitations and Future Work}
Empirically, BERT mixes faster than the bound established in \ref{thm:contraction} predicts. We conjecture this is because the chain rapidly leaves coherent attractors and spends most of its time in incoherent configurations where typical pairwise influence is much smaller than $\alpha(\tau)$. A complete characterization --- for instance via a context-averaged argument that integrates over the stationary distribution rather than taking worst-case suprema --- is left to future work. Relatedly, the cross-over from slow to fast mixing does not seem to sharpen as $n$ increases. Fully explaining this discrepancy would require further mechanistic analysis.

We do not experimentally analyze how incompatibility translates to deviation between the empirical stationary distribution and pseudo-likelihood scores. Additionally, our characterization of recurrent basins relies on zero-shot topic classifiers; finer-grained semantic analysis is left to future work. Finally, we do not provide a closed-form characterization of $\mu_\tau$; preliminary Lyapunov-function approximations rank sequences by occurrence probability but do not match $\mu_\tau$ quantitatively.


\newpage

\bibliographystyle{plainnat}
\bibliography{neurips_2026}

\newpage

\appendix
\section{Technical appendices and supplementary material}Technical appendices with additional results, figures, graphs, and proofs may be submitted with the paper submission before the full submission deadline (see above). You can upload a ZIP file for videos or code, but do not upload a separate PDF file for the appendix. There is no page limit for the technical appendices. Note: Think of the appendix as ``optional reading'' for reviewers. The paper must be able to stand alone without the appendix; for example, adding critical experiments that support the main claims to an appendix is inappropriate.

\subsection{Measuring Token Influence Functions and Score Margins}\label{app:measurements}\

Theorems~\ref{thm:contraction} and~\ref{thm:exponential} are structural: they apply to any chain satisfying their respective hypotheses. We now empirically estimate temperature conditions such that BERT operates in the parameter regimes these theorems describe.

\paragraph{Influence coefficients and the fast-mixing threshold.}
We estimate $\alpha(\tau) = \max_i \sum_{j \neq i} c_{ij}(\tau)$ directly by 
computing TV distances between conditionals at site $i$ as we vary the token at 
site $j$, averaged over multiple base contexts on a 14-token seed sentence. The 
empirical estimate is a lower bound on each $c_{ij}(\tau)$, and hence on 
$\alpha(\tau)$. We obtain $\alpha(10) \gtrsim 1.04$ and $\alpha(11) \gtrsim 0.94$, 
so the contraction sufficient condition $\alpha(\tau) < 1$ requires at least $\tau>10$, roughly an order of magnitude above the empirical fast-mixing 
boundary $\tau \approx 1.5$ (Section~\ref{sec:phase_transition}). The gap reflects that 
$\alpha(\tau)$ is a supremum over both contexts and update sites, dominated by 
coherent contexts where BERT's conditionals are sharply peaked; typical chain 
configurations are less coherent and exhibit smaller effective pairwise influence.

\paragraph{Score margins and trap verification.}Theorem~\ref{thm:exponential} requires a metastable basin $B$ satisfying the uniform local margin condition (Assumption~\ref{ass:margin}). We verify this directly by running 30 Glauber chains of $10{,}000$ steps at $\tau = 1.0$, detecting traps via a rolling embedding-distance criterion (flagging windows where the 300-step mean cosine drift falls below 0.12), and then measuring the per-position score gap $\Delta_i(x) = s_i(x_i;\, x_{-i}) - \max_{a \neq x_i} s_i(a;\, x_{-i})$ for each detected trap state. We find 17 trap states (mean duration 671 steps, range 150--4{,}950), which partition into two types. \textit{Perfect traps} (4 of 17) satisfy Assumption~\ref{ass:margin} exactly:every token is BERT's argmax with $\Delta^* > 0$.The extreme case is a 15-token string of periods (\texttt{...............}),with $\bar{\Delta} = 9.14$, $\Delta^* = 7.46$, and $P(\text{current token}) \geq 0.997$ at every position; this trap persisted for 4{,}950 of 10{,}000 steps. A semantically richer example, ``it has been used as the theme song for jesus christ superstar,'' achieves $\Delta^* = 0.56$ and held for 650 steps. We hypothesize that factual specificity creates strong traps because training data nearly uniquely determines each token given the rest. \textit{Approximate traps} (13 of 17) have at least one position where $\Delta_i < 0$ and thus do not satisfy the sufficient conditions, yet persist for 150--1{,}200 steps. The mechanism is collective stability: strongly-locked surrounding structure maintains metastability even when individual tokens are locally suboptimal. A recurring example is the family of scoring sentences (winners were awarded 2 points, while losers would be awarded 1 point''), where while'' ($\Delta = -1.06$) and would'' ($\Delta = -0.39$) are not argmax choices, yet $\bar{\Delta} = 4.12$ across the full sentence. Similarly, four traps in a single chain share the Wikipedia template the population density was (\ldots) [number] per square mile,'' cycling within this family for over 3{,}000 combined steps. This suggests the energy landscape contains basin \emph{neighborhoods} — clusters of related metastable states — rather than isolated fixed points.

\begin{table}[ht]
\centering
\small
\begin{tabular}{llccccc}
\hline
No. & Representative text & Dur. & $\bar{\Delta}$ & $\Delta^*$ & All argmax & Type \\
\hline
1  & \texttt{...............}                        & 4950 & 9.14 & \phantom{$-$}7.46 & $\checkmark$ & Perfect \\
11 & theme song for jesus christ superstar           &  650 & 5.44 & \phantom{$-$}0.56 & $\checkmark$ & Perfect \\
4  & winners are awarded 2 points\ldots              &  350 & 4.16 & \phantom{$-$}0.06 & $\checkmark$ & Perfect \\
6  & it took me a moment to figure out\ldots         &  350 & 5.78 & \phantom{$-$}0.01 & $\checkmark$ & Perfect \\
\hline
5  & winners were awarded 2 points, while\ldots      & 1200 & 4.12 &            $-1.06$ & $\times$  & Approx. \\
16 & `six months ago, you mean.' `six\ldots'         & 1050 & 4.99 &            $-1.09$ & $\times$  & Approx. \\
10 & population density was (approx.) 7.5\ldots      &  300 & 9.23 &            $-0.22$ & $\times$  & Approx. \\
15 & 9. honest response to you. 10. fan\ldots        &  650 & 3.10 &            $-9.56$ & $\times$  & Approx. \\
\hline
\end{tabular}
\caption{Representative trap states detected across 30 chains ($\tau = 1.0$, $10{,}000$ steps each). Duration in Glauber steps. $\bar\Delta$ = mean score gap; $\Delta^*$ = minimum. Perfect traps satisfy Assumption~\ref{ass:margin}; approximate traps are metastable despite violating it.}
\label{tab:traps}
\end{table}

\paragraph{Tightness of the slow-mixing bound.}
For perfect traps, Theorem~\ref{thm:exponential} predicts 
$t_{\mathrm{mix}}(1/4) \geq \tfrac{1}{4|V|}\,e^{\Delta_*/\tau}$. 
Because $\Delta_*/\tau < \ln|V| \approx 10.3$ for all observed traps, the predicted 
bound falls below one step (Trap~1: predicted $\approx 0.014$, observed $4{,}950$), 
so the bound confirms positive escape time but is not quantitatively informative at 
these gap sizes. The $|V|$ prefactor and the worst-case minimum-gap definition of 
$\Delta_*$ are the main sources of looseness; the mean gap $\bar{\Delta}$ correlates 
better with observed duration empirically but lacks a corresponding rigorous bound.

\section{Additional Experimental Details}\label{app:details}\subsection{Experimental Setup}\label{sec:setup_full}

\paragraph{Model.} We use BERT-base-uncased (Devlin et al., 2019) accessed via the HuggingFace Transformers library.

\paragraph{Seed texts.} Unless otherwise specified, initial sequences $x_0$ are drawn from the MS MARCO passage dataset  truncated to a specified number of tokens $n$.

\paragraph{Embedding model.} For measuring semantic drift, we use Sentence Transformers.

\paragraph{Classifiers.} For topic and sentiment analysis, we use Hugging Face pipelines. For sentiment we use the sentiment pipeline, initialized with pipeline("sentiment-analysis") and for topics we use the classifier pipeline pipeline("zero-shot-classification", model="facebook/bart-large-mnli"), for a list of candidate topics :'politics', 'health', 'pop culture', 'sport', 'food', 'science', 'technology', 'finance', 'education', 'environment', and 'art'.

\paragraph{Hardware.} Experiments were conducted on Google Colab using T4 and A100 GPUs.\paragraph{Experimental protocols.} We use two main protocols for studying mixing time:

\begin{itemize}
\item \textbf{Version 1 (fixed time):} Fix $T$ steps, measure $d(x_T, x_0)$ as a function of sequence length $n$.
\item \textbf{Version 2 (hitting time):} Fix target distance $R$, measure the hitting time $T_R = \min\{t : d(x_t, x_0) \geq R\}$ as a function of $n$.\end{itemize}\subsection{Hyperparameters}Unless otherwise specified:\begin{itemize}
\item Temperature: $\tau = 1.0$
\item Steps: $T = 10^4$
\item Seed text length: 20 tokens
\item Trials per condition: 100\end{itemize}

\subsection{Embedding Model Details}
We use Jina Embeddings v4, accessed via the HuggingFace sentence-transformers library. The model produces 768-dimensional embeddings optimized for semantic similarity.\subsection{Measuring Mixing via Embeddings} We measure drift and mixing using text embeddings. We use a pretrained sentence embedding model $\phi: \mathcal{X} \to \mathbb{R}^d$ (Jina Embeddings v4 with $d = 768$) to project sequences into a continuous semantic space. We define the \emph{embedding distance} from the initial state as:\begin{equation}d(x_t, x_0) = 1 - \cos(\phi(x_t), \phi(x_0))\end{equation} where $\cos(\cdot, \cdot)$ denotes cosine similarity. This embedding technique quantifies the drift from the original meaning, which we call semantic mixing.

\subsection{Endpoint Cross-Model Political-Content Recurrence}
\label{app:cross_model_drift}

\begin{table}[h]
\centering
\small
\caption{Preliminary endpoint cross-model comparison using BART-MNLI.
BERT values use the expanded 10-seed-per-topic experiment; RoBERTa and
ModernBERT values use 5 seeds per topic. Values are mean politics
scores at step 10{,}000. $\checkmark$ marks topics crossing the
baseline of $0.111$.}
\label{tab:cross_model_drift}
\begin{tabular}{lrrr}
\toprule
\textbf{Topic} & \textbf{BERT} & \textbf{RoBERTa} & \textbf{MBERT} \\
\midrule
Health     & 0.044 & \textbf{0.112}$\checkmark$ & 0.026 \\
Sport      & 0.036 & 0.042 & 0.052 \\
Science    & 0.056 & 0.062 & 0.037 \\
Food       & \textbf{0.174}$\checkmark$ & 0.052 & 0.038 \\
Technology & 0.038 & \textbf{0.198}$\checkmark$ & 0.033 \\
\bottomrule
\end{tabular}
\end{table}

\subsection{PCA Analysis and Trajectory Visualization}\label{sec:pca}

We applied two-dimensional PCA to sentence embeddings from the \texttt{SentenceTransformer} model to visualize the temporal evolution of generated sequences, with points colored from warm (early) to cool (late) steps and sequential points connected by lines. The resulting trajectories (Figures~\ref{fig:pca_10000} and \ref{fig:pca_3500}) reveal clear point clusters corresponding to repeated identical sentences, which we interpret as traps in the projected space; note, however, that additional traps may exist in the remaining 766 dimensions not visible in this projection.

\section{Extended Example Trajectories}
\label{app:trajectories}

The examples in this appendix illustrate the qualitative behaviors predicted by the
theoretical results of Section~5. Sections~\ref{sec:traps} and~\ref{sec:temp} ground
the trap and metastability phenomena in concrete outputs. Section~\ref{sec:prefixes}
documents how fixed prefixes create strong semantic attractors, accelerating the
drift described quantitatively in Section~6.3.

\subsection{Characterization of Trap States}
\label{sec:traps}

A striking phenomenon at moderate-to-low temperatures is the emergence of
\textbf{trap states}: configurations where the chain remains stuck for thousands of
steps. Formally, a trap corresponds to a metastable basin $B$ satisfying the local
margin condition of Assumption~1; see Table~5 for the full catalogue of traps
detected across 30 chains, including their mean and minimum score gaps $\bar{\Delta}$
and $\Delta^*$.

\subsection{Examples of Trap States}

We document two categories of traps. In both cases the chain enters the trap
gradually: embedding distance from the initial state stops growing and the
300-step cosine drift falls below 0.12 (the detection threshold of Appendix~A.1),
where it remains until an escape event, typically triggered by a low-probability
token substitution at the weakest-scoring position.

\paragraph{Semantically coherent traps.}
Some traps are grammatically correct and semantically meaningful. These tend to
be \emph{perfect traps} in the sense of Table~5: every token is BERT's argmax
given its context, producing large positive $\Delta^*$. We hypothesize, consistent
with Appendix~A.1, that factual or topical specificity is the key driver: when the
surrounding context nearly uniquely determines each token in the training
distribution, the resulting score margins are large and the basin is deep.

\begin{quote}
\small\texttt{[``guacamole'' trap, steps 2k--10k]}
\textit{In 20th century England, ornamental guacamole was made by filling wooden
sauce jars with vegetables, and served with boiled cigars.}
\end{quote}

\begin{quote}
\small\texttt{[``Brett Kavanaugh'' trap from step 8k]}
\textit{Donald Trump said that he would support Brett Kavanaugh for the Supreme
Court, though he will think about future judicial nominations. But for now, Brett
Kavanaugh is in fact dead.}
\end{quote}

Note that both of these traps are factually wrong or internally contradictory, yet
locally stable: BERT's conditional distributions are shaped by co-occurrence
statistics in training data, not by global factual consistency.

\begin{quote}
\small\texttt{[``kissing modern European men'' trap, steps 6k--10k]}
\textit{The fool always says, don't be jealous of kissing modern European men, but
that's a nonsense statement. You were never meant to be jealous of kissing modern
European men.}
\end{quote}

\begin{quote}
\small\texttt{[``Night of the Living Dead'' trap from step 8k]}
\textit{It's a shame that Night of the Living Dead featured so many creepy aliens
we never knew existed, so who can blame us for asking? What are your two favorite
characters?}
\end{quote}

\paragraph{Nonsensical traps.}
Other traps are grammatically broken yet locally stable --- an instance of the
\emph{approximate trap} regime of Table~5, where collective stability across
the sentence maintains metastability even when individual positions are not
argmax choices. The mechanism is mutual reinforcement: each token is assigned
high conditional probability by the others, even though no coherent global
interpretation exists.

\begin{quote}
\small\texttt{[step 8k]}
\textit{It can be caused by one or more of the three coathoses, the voltage on
top of the trap used to control movement of the contents from one part of the
wall to another.}
\end{quote}

\begin{quote}
\small\texttt{[step 10k]}
\textit{The spin is defined in one or more of the following ways: by decreasing
the periodic table so that no sufficiently large spike exists, or by increasing
the periodic table again through time until a sufficiently large spike is
observed.}
\end{quote}

\subsection{Temperature Regimes and Sentence Coherence}
\label{sec:temp}

At low temperatures ($\tau \leq 0.5$), we observe dramatically slower mixing,
consistent with the slow-mixing regime of Figure~2. The chain frequently becomes
trapped in metastable states for extended periods, and embedding distance from
the initial state (Equation~B.1) grows far more slowly than the $C(\tau)n\log n$
trend observed at high temperature.

For example, at $\tau = 0.1$, chains initialized with a rabies passage remained
within cosine distance ${\approx}0.05$ of the initial embedding for thousands of
steps, and the text preserved much of the original meaning:

\begin{quote}
\small\textbf{[temp 0.1, steps 3k--8k]:} \textit{Population control and vaccination
campaigns have prevented the spread of rabies from spreading to a number of
countries around the world.}
\end{quote}

At $\tau = 0.01$, the text barely changed over 10k steps (cosine distance
$< 0.01$):

\begin{quote}
\small\textbf{[temp 0.01, step 1k to 10k]:} \textit{Birth control and vaccination
programs have prevented the spread of rabies among children in a number of parts
of the world.}
\end{quote}

At very high temperatures ($\tau > 2$), the chain produces nonsensical output as
token selection becomes nearly uniform, destroying all semantic structure:

\begin{quote}
\small\texttt{[temp=10, step 10k]}
\textit{Py specific CNBCagged guarantees easiest reeling coin Lung NATillusionteen
torque KEY a speculateHon Keystone MaidenHideDay Dustin}
\end{quote}

\begin{quote}
\small\texttt{[temp=100, step 10k]}
\textit{HAHAHAHAexpected Downtownasin Company287 vetting vergeardless od
iterationAllows Garage climaxstud oranges symptoms slots moveSTR Christoksovsky}
\end{quote}

\subsection{Prefix-Conditioned Attractors}
\label{sec:prefixes}

A fixed prefix constrains every resampling step: the tokens covered by the prefix
are never updated, biasing the conditional distributions at all other positions.
This effectively narrows the reachable state space and can concentrate probability
mass on a specific semantic basin, accelerating the drift discussed in
Section~6.3. The following examples illustrate how the identity encoded in the
prefix shapes the attractor the chain converges to.

\textbf{``Albert Einstein said that...''}
\begin{itemize}
    \item \textit{...he never expressed his thoughts or feelings to his audience,
    but they always expressed their opinion: everyone needs to do as much good
    work as we can.}
    \item \textit{...general relativity is an extremely tough problem. Because it
    can't be solved by quantum observation alone, it would be quite difficult to
    solve it more broadly.}
\end{itemize}

\textbf{``Elvis Presley said that...''}
\begin{itemize}
    \item \textit{...Bojangles had texted him right before the press conference,
    making it clear to him that they plan to spend as much time together as they
    can.}
\end{itemize}

\textbf{``LeBron James said that...''}
\begin{itemize}
    \item \textit{...with three and a half games to play, `I've got to be able to
    make the impact and get as much playing time as possible.'}
\end{itemize}

\textbf{``The little kid said that...''}
\begin{itemize}
    \item \textit{...the recordings could be used by more people to exploit the
    footage, and added that it might be used as a communications tool for
    terrorists planning another attack.}
\end{itemize}

The last example illustrates that prefix-conditioned attractors are determined by
training co-occurrence, not semantic plausibility: ``the little kid said that''
shares syntactic structure with news-report speech attributions, pulling the chain
toward the same political and crime-adjacent content as directly political
prefixes. This is the same mechanism responsible for the drift documented in
Section~6.3, and the ``Donald Trump said that'' prefix discussed there represents
the extreme case: chains become political almost immediately, with the prefix
providing essentially no barrier to the political attractor.

\section{Supplemental Figures}\label{app:supp}

\begin{figure}[ht]
    \centering
    \includegraphics[width=0.8\textwidth]{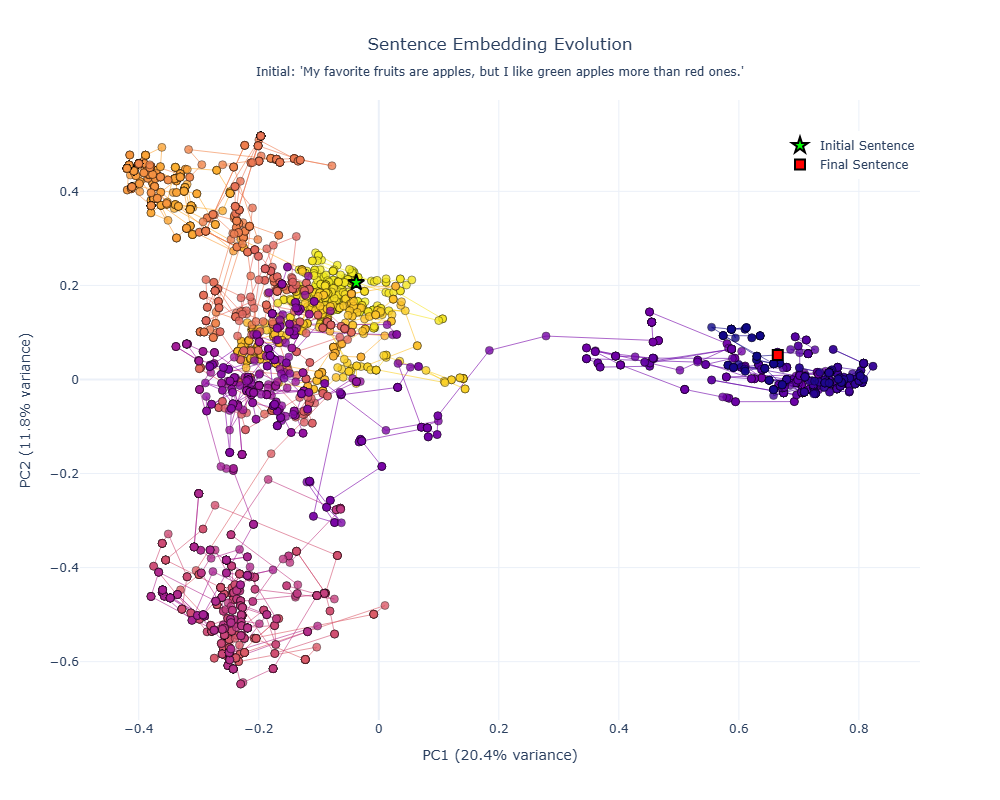}
    \caption{PCA projections of embedding trajectories at 3500 steps. Warm colors indicate early steps, cool colors indicate later steps; clusters correspond to traps in the projected space. \textbf{Initial:} ``My favorite fruits are apples, but I like green apples more than red ones.'' \textbf{Final:} ``But the papers from the Committee on Reconciliation are worth taking a look anyway.''}
    \label{fig:pca_3500}
\end{figure}

\begin{figure}[ht]
    \centering
    \includegraphics[width=0.8\textwidth]{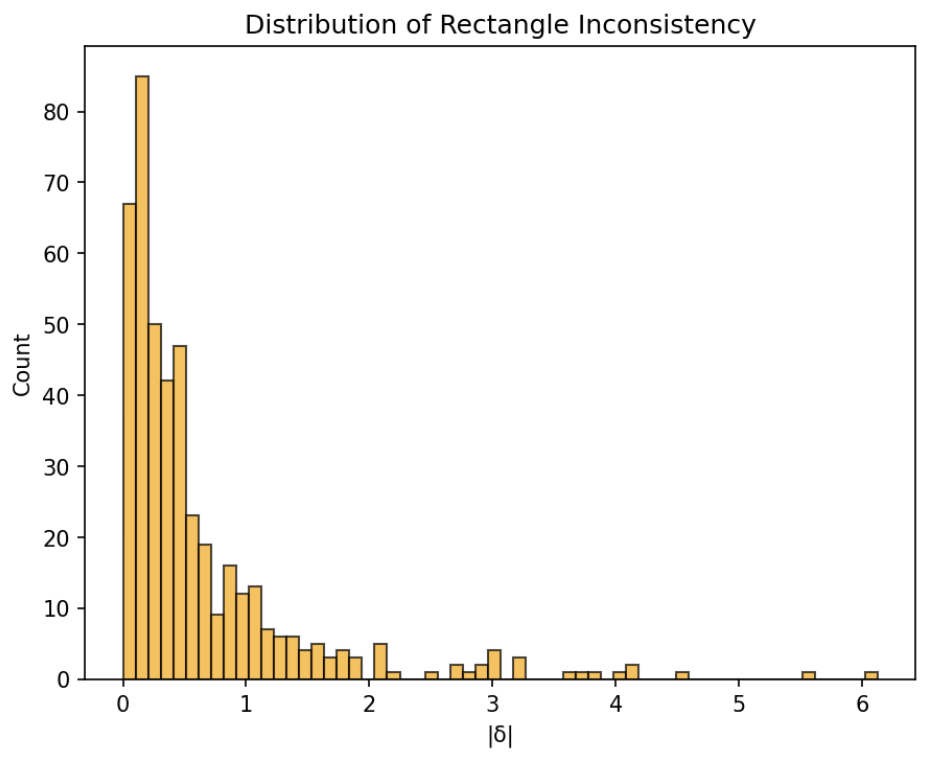}
    \caption{Distribution of rectangle incompatibility with BERT; methodology described in Section \ref{sec:incompatibility}.}
    \label{fig:incompatibility_distribution}
\end{figure}

\begin{figure}[ht]
    \centering
    \includegraphics[width=0.8\textwidth]{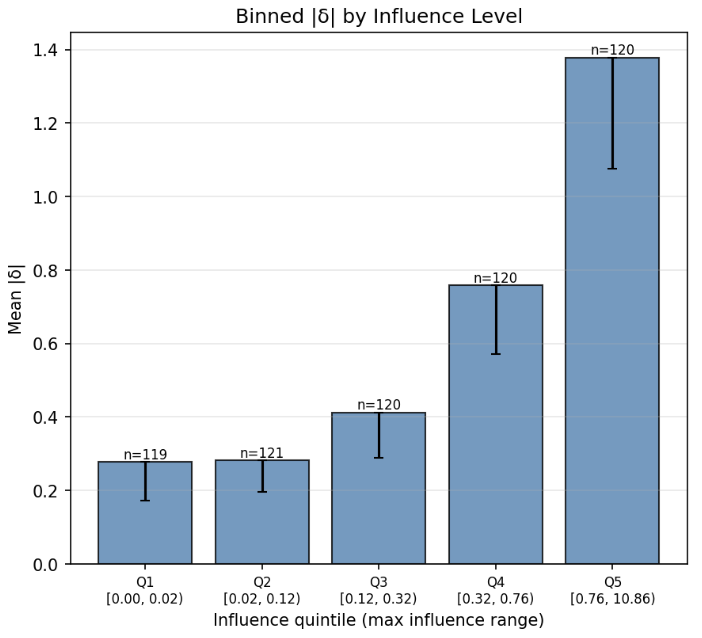}
    \caption{Token-level Influence Amplifies Rectangle Incompatibility. $\text{Influence}(i,j)$ is measured by absolute change in the log-probability of the token at position $i$ when the token at $j$ is swapped; here we take $\max(\text{Influence}(i,j),\text{Influence}(j,i)).$}
    \label{fig:incompatibility_influence}
\end{figure}

\begin{figure}
    \centering
    \includegraphics[width=0.9\linewidth]{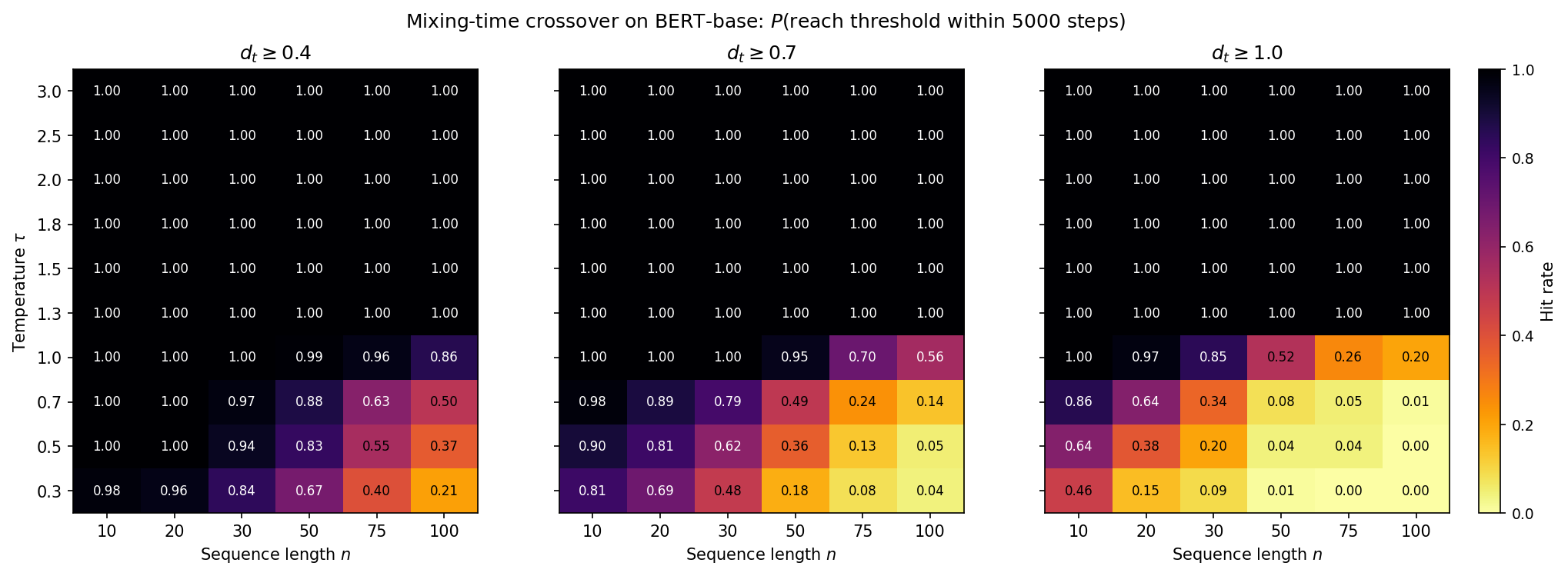}
    \caption{Fraction of 100 initialized chains achieving embedding distance $d_t\geq\{0.4,0.7,1.0\}$ from starting sequence as a function of temperature and sequence length. Mini-LM 384-dimensional embeddings applied every 5 steps; cutoff at 5000 steps.}
    \label{fig:mixing_hitting}
\end{figure}

\section{Deferred Proofs}\label{app:proofs}

\begin{prop}\label{prop:markovian}
For each $\tau>0$, $P_\tau$ is a Markov kernel on $\X$. If $p_{\theta,\tau}(a\mid x_{-i})>0$ for all $i,x_{-i},a$, then $P_\tau$ is irreducible and aperiodic.
\end{prop}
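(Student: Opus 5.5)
The plan is to verify the three properties directly from the definition of $P_\tau$.

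\emph{Markov kernel.} Nonnegativity is immediate, since every term in the sum is nonnegative. For the row sums, fix $x\in\X$ and exchange the order of summation. For a fixed site $i$, the indicator $\mathbf{1}_{x_{-i}=x'_{-i}}$ restricts $x'$ to the $|\V|$ sequences $x^{(i\to a)}$ with $a\in\V$. Hence
\[
\sum_{x'}P_\tau(x,x')=\frac1n\sum_{i=1}^n\sum_{a\in\V}p_{\theta,\tau}(a\mid x_{-i})=\frac1n\sum_{i=1}^n 1=1,
\]
because each $p_{\theta,\tau}(\cdot\mid x_{-i})$ is a softmax and therefore a probability vector on $\V$. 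One point needs care: when $x'=x$, that single $x'$ is hit by every site $i$. This causes no double counting, because the exchanged sum already enumerates the pairs $(i,a)$, and the $x'=x$ contributions are exactly the $a=x_i$ terms.

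\emph{Irreducibility.} Assume strict positivity. Given $x,y\in\X$, build the path that changes coordinates one at a time:
\[
z^{(0)}=x,\qquad z^{(k)}=(y_1,\dots,y_k,x_{k+1},\dots,x_n).
\]
Consecutive states $z^{(k-1)}$ and $z^{(k)}$ differ at most at site $k$. Therefore
\[
P_\tau\bigl(z^{(k-1)},z^{(k)}\bigr)\ge \tfrac1n\,p_{\theta,\tau}\bigl(y_k\mid z^{(k-1)}_{-k}\bigr)>0.
\]
If $z^{(k-1)}=z^{(k)}$, the same lower bound still applies via the $a=x_k$ term. Multiplying along the path gives $P_\tau^n(x,y)>0$.

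\emph{Aperiodicity.} For any $x$, keep only the term $a=x_i$ at each site to get
\[
P_\tau(x,x)=\frac1n\sum_i p_{\theta,\tau}(x_i\mid x_{-i})>0.
\]
So every state has a self-loop, its period divides $1$, and the chain is aperiodic. In fact the path argument above shows $P_\tau^t(x,y)>0$ for all $t\ge n$, since extra steps can be padded with self-loops. This gives primitivity directly, which is what the subsequent corollary uses.

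No step is a real obstacle. The only point needing care is the bookkeeping in the row-sum computation when $x'=x$, which the exchange of summation handles.
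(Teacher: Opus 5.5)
Your proposal is correct and matches the paper's proof essentially step for step: the row sums are computed by exchanging the order of summation, so that each site contributes $\sum_{a\in\V}p_{\theta,\tau}(a\mid x_{-i})=1$. Irreducibility comes from changing coordinates one at a time, aperiodicity from the self-loop $P_\tau(x,x)=\frac1n\sum_i p_{\theta,\tau}(x_i\mid x_{-i})>0$, and your explicit path $z^{(k)}$ and observation that $P_\tau^t(x,y)>0$ for all $t\ge n$ just make precise what the paper leaves implicit.
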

\begin{proof}
Nonnegativity is immediate. For fixed $x\in\X$,$$\sum_{x'\in\X} P_\tau(x,x')
=
\frac1n \sum_{i=1}^n \sum_{x'\in\X}
\mathbf{1}_{x_{-i}=x'_{-i}} p_{\theta,\tau}(x_i'\mid x_{-i}).$$For fixed $i$, the constraint $x'_{-i}=x_{-i}$ leaves only the $i$-th coordinate free, so$$\sum_{x'\in\X}
\mathbf{1}_{x_{-i}=x'_{-i}} p_{\theta,\tau}(x_i'\mid x_{-i})
=
\sum_{a\in\V} p_{\theta,\tau}(a\mid x_{-i}) = 1.$$Hence$$\sum_{x'\in\X} P_\tau(x,x') = \frac1n \sum_{i=1}^n 1 = 1.$$Irreducibility: for any $x,y\in\X$, change the coordinates of $x$ one by one to match $y$. Each prescribed single-site change has positive probability.Aperiodicity: for every $x\in\X$,$$P_\tau(x,x)
=
\frac1n \sum_{i=1}^n p_{\theta,\tau}(x_i\mid x_{-i}) > 0.$$Thus every state has a self-loop.\end{proof}\begin{thm}[High-temperature contraction]Assume $\alpha(\tau)<1$. Then for the Hamming metric$$d_H(x,y):=\sum_{i=1}^n \mathbf{1}_{x_i\neq y_i},$$there exists a coupling of the chain such that for all $x,y\in\X$,$$\mathbb{E}[d_H(X_1,Y_1)\mid X_0=x,Y_0=y]
\le
\left(1-\frac{1-\alpha(\tau)}{n}\right)d_H(x,y).$$Consequently,$$t_{\mathrm{mix}}(\varepsilon)
\le
\frac{n}{1-\alpha(\tau)}
\left(\log n + \log \frac1\varepsilon\right),$$where $t_{\mathrm{mix}}(\varepsilon)$ denotes total-variation mixing time.\end{thm}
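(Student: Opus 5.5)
The plan is to build the standard \emph{same-site optimal coupling} and then track the disagreement \emph{profile} coordinate by coordinate, instead of only its sum. This lets the row-sum quantity $\alpha(\tau)=\max_i\sum_{j\neq i}c_{ij}(\tau)$ enter naturally.

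\textbf{Coupling.} Given states $(x,y)$, both chains pick the same uniform site $I\in[n]$. They then resample $X_1(I)\sim p_{\theta,\tau}(\cdot\mid x_{-I})$ and $Y_1(I)\sim p_{\theta,\tau}(\cdot\mid y_{-I})$ under a maximal coupling, so that
\[
\Pr[X_1(I)\neq Y_1(I)\mid I=i]=\|p_{\theta,\tau}(\cdot\mid x_{-i})-p_{\theta,\tau}(\cdot\mid y_{-i})\|_{\mathrm{TV}}.
\]
Let $D=\{j: x_j\neq y_j\}$. I bound this TV distance by moving from $x$ to $y$ one disagreeing coordinate at a time. Each intermediate pair differs only at one site $j\neq i$, so the triangle inequality and the definition of $c_{ij}$ give
\[
\|p_{\theta,\tau}(\cdot\mid x_{-i})-p_{\theta,\tau}(\cdot\mid y_{-i})\|_{\mathrm{TV}}\le\sum_{j\in D,\,j\neq i}c_{ij}(\tau).
\]
The coordinate $x_i$ itself does not matter, since the conditional at site $i$ ignores it.

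\textbf{Key recursion.} Run the coupling for $t$ steps from $(X_0,Y_0)=(x,y)$ and set $u_i(t)=\Pr[X_t(i)\neq Y_t(i)]$. Site $i$ is untouched with probability $1-1/n$. It is refreshed with probability $1/n$, in which case it disagrees with probability at most $\sum_{j\ne i}c_{ij}\mathbf 1[X_t(j)\ne Y_t(j)]$. Taking expectations gives the componentwise inequality
\[
u(t+1)\le\Bigl(1-\tfrac1n\Bigr)u(t)+\tfrac1n\,C\,u(t),\qquad C=(c_{ij}(\tau))_{i\neq j}.
\]
The row-sum hypothesis says exactly that $\|C\|_{\infty\to\infty}=\alpha(\tau)$. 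Hence
\[
\|u(t+1)\|_\infty\le\Bigl(1-\tfrac{1-\alpha(\tau)}{n}\Bigr)\|u(t)\|_\infty,
\]
which is the contraction statement in the form that the row-sum assumption supports.

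\textbf{Mixing bound.} Since $\|u(0)\|_\infty\le1$, a union bound gives
\[
\Pr[X_t\neq Y_t]\le\sum_i u_i(t)\le n\Bigl(1-\tfrac{1-\alpha(\tau)}{n}\Bigr)^t\le n\,e^{-t(1-\alpha(\tau))/n}.
\]
The coupling inequality, applied with $Y_0\sim\mu_\tau$ (averaging over the start of $Y$), then bounds $\|P_\tau^t(x,\cdot)-\mu_\tau\|_{\mathrm{TV}}$ by the same quantity. This is at most $\varepsilon$ once $t\ge\frac{n}{1-\alpha(\tau)}(\log n+\log\frac1\varepsilon)$, which is exactly the stated bound on $t_{\mathrm{mix}}(\varepsilon)$.

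\textbf{Main obstacle.} The difficulty is the displayed one-step inequality for $\mathbb E[d_H(X_1,Y_1)]=\sum_i u_i(1)$ itself. Summing the recursion over $i$ produces $\sum_{j\in D}\sum_{i}c_{ij}$, which is a \emph{column} sum. A row-sum bound controls this only when $C$ is (near-)symmetric. I would first try to close the gap via a weighted Hamming metric $\sum_i w_i\mathbf 1[x_i\ne y_i]$ with $w$ a positive left sub-eigenvector of $(1-\frac1n)I+\frac1nC$. If that fails, I would present the $\ell_\infty$ contraction above as the operative form of the first claim. That form is all that the mixing-time consequence actually uses.
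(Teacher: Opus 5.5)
Your argument for the mixing-time bound is correct, and it takes a different route from the paper's. The paper uses Bubley--Dyer path coupling in the plain Hamming metric. For $x,y$ differing only at $j$ it obtains $\mathbb{E}[d_H(X_1,Y_1)]\le 1-\frac1n+\frac1n\sum_{i\neq j}c_{ij}(\tau)$, and then asserts this is at most $1-\frac{1-\alpha(\tau)}{n}$. That last step is exactly the obstacle you flagged. The quantity $\sum_{i\neq j}c_{ij}(\tau)$ is the total influence of site $j$ on all others (a column sum), while $\alpha(\tau)$ is a maximal row sum, and without symmetry the two are unrelated. Your per-site recursion $u(t+1)\le\bigl((1-\frac1n)I+\frac1n C\bigr)u(t)$, combined with $\|C\|_{\infty\to\infty}=\alpha(\tau)$, is the Dobrushin-type argument that the row-sum hypothesis actually supports. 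It yields precisely the stated $t_{\mathrm{mix}}(\varepsilon)$ bound. The paper's argument would be valid only if $\alpha(\tau)$ were defined as $\max_j\sum_{i\neq j}c_{ij}(\tau)$.

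You can stop trying to rescue the first display, because it is false under $\alpha(\tau)<1$. Take $\V=\{0,1\}$ and let site $1$ be resampled from $\mathrm{Bern}(1/2)$ regardless of context. For $i\ge 2$ let $p_{\theta,\tau}(1\mid x_{-i})=\frac12+\beta(x_1-\frac12)$ with $0<\beta<1$. Then $c_{i1}(\tau)=\beta$ for $i\ge2$ and every other $c_{ij}(\tau)=0$, so $\alpha(\tau)=\beta$. Let $x,y$ differ only at site $1$, with $x_1=0$ and $y_1=1$. The test function $f(w)=\sum_i w_i$ is $1$-Lipschitz for $d_H$, so for every coupling
\[
\mathbb{E}[d_H(X_1,Y_1)]\;\ge\;\mathbb{E}f(Y_1)-\mathbb{E}f(X_1)\;=\;\frac{n-1}{n}+\frac{(n-1)\beta}{n}.
\]
This exceeds $1-\frac{1-\beta}{n}$ whenever $n\ge 3$, and exceeds $1$ once $n>1+1/\beta$. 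So no coupling contracts $d_H$ on this adjacent pair, and both the paper's adjacent-pair inequality and its path-coupling extension are unjustified.

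A positive weight vector with $w^{\top}C\le\lambda' w^{\top}$ exists for any $\rho(C)<\lambda'<1$, for instance $w^{\top}=\mathbf{1}^{\top}(\lambda' I-C)^{-1}$. It gives a genuine one-step contraction, but in the weighted metric $d_w$, not in $d_H$. Converting back to a mixing bound costs a factor $\max_i w_i/\min_i w_i$. Your $\ell_\infty$ statement is therefore the right replacement for the first claim, and it suffices for the second.
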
\begin{proof}Fix $x,y\in\X$. We couple one step as follows: choose the same update site $I\sim \mathrm{Unif}[n]$ in both chains. Conditional on $I=i$, couple the two resampled tokens using a maximal coupling of$$p_{\theta,\tau}(\cdot\mid x_{-i})
\quad\text{and}\quad
p_{\theta,\tau}(\cdot\mid y_{-i}).$$Under a maximal coupling, the probability of disagreement at the updated site is exactly the total variation distance between these two measures.It suffices to analyze the case $d_H(x,y)=1$ by path coupling. Let $x,y$ differ only at site $j$.If $I=j$, then $x_{-j}=y_{-j}$, so the two conditional distributions are identical. Hence under the coupling, the resampled tokens agree almost surely and therefore$$d_H(X_1,Y_1)=0.$$If $I=i\neq j$, then all coordinates except possibly $i$ remain unchanged. Since $x$ and $y$ differ only at site $j$, we have$$\mathbb{P}(X_1^i\neq Y_1^i \mid I=i)
=
\|p_{\theta,\tau}(\cdot\mid x_{-i})-p_{\theta,\tau}(\cdot\mid y_{-i})\|_{\mathrm{TV}}
\le c_{ij}(\tau).$$The original disagreement at site $j$ persists because site $j$ is not updated. Thus$$\mathbb{E}[d_H(X_1,Y_1)\mid I=i]
\le 1 + c_{ij}(\tau).$$Averaging over the update site,$$\mathbb{E}[d_H(X_1,Y_1)]
\le
\frac1n\cdot 0 + \frac1n\sum_{i\neq j} (1+c_{ij}(\tau))
=
1-\frac1n+\frac1n\sum_{i\neq j} c_{ij}(\tau)
\le
1-\frac{1-\alpha(\tau)}{n}.$$So the expected distance contracts by this factor for adjacent pairs. By the Bubley--Dyer path coupling theorem, the same contraction extends to arbitrary pairs:$$\mathbb{E}[d_H(X_t,Y_t)] \le \left(1-\frac{1-\alpha(\tau)}{n}\right)^t d_H(x,y).$$Since total variation distance between two laws is bounded by the coupling disagreement probability, and disagreement probability is bounded by expected Hamming distance,$$\|P_\tau^t(x,\cdot)-P_\tau^t(y,\cdot)\|_{\mathrm{TV}}
\le
\mathbb{P}(X_t\neq Y_t)
\le
\mathbb{E}[d_H(X_t,Y_t)].$$Choosing $y$ distributed from stationarity and using $\max_x d_H(x,y)\le n$, we obtain$$\sup_x \|P_\tau^t(x,\cdot)-\mu_\tau\|_{\mathrm{TV}}
\le
n\left(1-\frac{1-\alpha(\tau)}{n}\right)^t
\le
n\exp\left(-\frac{1-\alpha(\tau)}{n}t\right).$$Thus it suffices to take$$t \ge \frac{n}{1-\alpha(\tau)}\left(\log n + \log\frac1\varepsilon\right).$$\end{proof}\begin{lemma}For all $i\neq j$ and $\tau>0$,$$c_{ij}(\tau) \le \frac{\Delta_{ij}}{4\tau}.$$\end{lemma}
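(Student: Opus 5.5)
The plan is to fix a pair $x,y$ agreeing off site $j$ and write the two conditionals at site $i$ as Gibbs measures over $\V$. Let $u(a):=s_i(a;x_{-i})/\tau$ and $h(a):=\bigl(s_i(a;y_{-i})-s_i(a;x_{-i})\bigr)/\tau$. Then $p_{\theta,\tau}(\cdot\mid x_{-i})\propto e^{u}$ and $p_{\theta,\tau}(\cdot\mid y_{-i})\propto e^{u+h}$. By definition of $\Delta_{ij}$, the oscillation of $h$, namely $\max_a h(a)-\min_a h(a)$, is at most $\Delta_{ij}/\tau$ (the sign flip between $x$ and $y$ does not change the oscillation). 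So it suffices to show that for any base measure $e^{u}$ and any tilt $h$ with oscillation $\mathrm{osc}(h)\le L$, the TV distance between $\nu_0\propto e^{u}$ and $\nu_1\propto e^{u+h}$ is at most $L/4$. Taking the supremum over $x,y$ then gives $c_{ij}(\tau)\le\Delta_{ij}/(4\tau)$.

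To get this, interpolate with $\nu_t\propto e^{u+th}$ for $t\in[0,1]$ and bound $\|\nu_1-\nu_0\|_{\mathrm{TV}}\le\int_0^1\|\tfrac{d}{dt}\nu_t\|_{\mathrm{TV}}\,dt$. A direct computation gives
\[
\tfrac{d}{dt}\nu_t(a)=\nu_t(a)\bigl(h(a)-\mathbb{E}_{\nu_t}h\bigr),
\]
so
\[
\|\tfrac{d}{dt}\nu_t\|_{\mathrm{TV}}=\tfrac12\,\mathbb{E}_{\nu_t}\bigl|h-\mathbb{E}_{\nu_t}h\bigr|.
\]
It remains to bound this mean absolute deviation by $L/4$, uniformly over probability measures.

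This uniform bound is the key step. For any probability measure $\nu$ and any function $h$ with values in an interval of length $L$, we have $\tfrac12\mathbb{E}_\nu|h-\mathbb{E}_\nu h|\le L/4$. The reason is that $\tfrac12\mathbb{E}|h-m|=\mathbb{E}(h-m)_+$ with $m=\mathbb{E}_\nu h$. Shift so that $h\in[0,L]$ and write $p=\nu(h>m)$. Then $\mathbb{E}(h-m)_+\le p(L-m)$ and also $\mathbb{E}(h-m)_+=\mathbb{E}(m-h)_+\le(1-p)m$. Taking the minimum of these two bounds and optimizing over $p$ and $m$ gives at most $L/4$. Equivalently, one can use $\mathbb{E}|h-m|\le\sqrt{\mathrm{Var}(h)}\le L/2$ via Popoviciu's inequality. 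Integrating over $t\in[0,1]$ then yields $\|\nu_1-\nu_0\|_{\mathrm{TV}}\le L/4$.

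The main point to check carefully is the constant $1/4$. Bounding the TV distance through the density ratio, or through Pinsker's inequality with a KL bound, gives weaker constants, for example $\tanh(L/4)$-type bounds that must then be linearized, or $L/\sqrt{8}$. The interpolation argument with the half-range mean-deviation bound gives exactly $L/4$. Finiteness of $\V$ makes differentiating under the sum trivial, so the remaining steps are routine.
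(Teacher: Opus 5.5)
Your proof is correct and follows the same route as the paper: write both conditionals at site $i$ as softmaxes whose logit difference has oscillation at most $\Delta_{ij}/\tau$, then apply the estimate $\mathrm{TV}\le\frac14\,\mathrm{osc}$. The paper only cites that softmax estimate as standard, whereas your interpolation $\nu_t\propto e^{u+th}$ combined with the half-range mean-deviation bound actually proves it, which makes your argument self-contained. (Your closing aside is off, though the proof does not depend on it: the sharp density-ratio bound $\tanh(L/4)$ is already at most $L/4$, and Pinsker combined with Hoeffding's lemma, $\mathrm{KL}\le L^2/8$, also gives exactly $L/4$.)
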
\begin{proof}Fix contexts $x_{-i},y_{-i}$ differing only at site $j$, and write$$u_a := s_i(a;x_{-i})/\tau,\qquad v_a:=s_i(a;y_{-i})/\tau.$$Let $\sigma(u),\sigma(v)\in\Delta(\V)$ denote the corresponding softmax distributions. A standard softmax Lipschitz estimate with respect to $\ell_\infty$ perturbations modulo additive constants gives$$\|\sigma(u)-\sigma(v)\|_{\mathrm{TV}}
\le
\frac14\left(\max_a (u_a-v_a)-\min_a (u_a-v_a)\right).$$Multiplying through by $1/\tau$ yields$$\|p_{\theta,\tau}(\cdot\mid x_{-i})-p_{\theta,\tau}(\cdot\mid y_{-i})\|_{\mathrm{TV}}
\le \frac{\Delta_{ij}}{4\tau}.$$Taking the supremum proves the claim.\end{proof}\begin{thm}[Low-temperature exponentially small escape]Assume the uniform local margin condition on a basin $B$. Then for every $x\in B$,$$P_\tau(x,B^c)
\le
|\V| e^{-\Delta_\star/\tau}.$$Consequently the conductance of $B$ satisfies$$\Phi_\tau(B)
:=
\frac{\sum_{x\in B}\mu_\tau(x)P_\tau(x,B^c)}{\mu_\tau(B)}
\le
|\V| e^{-\Delta_\star/\tau}.$$If additionally $\mu_\tau(B)\le \frac12$, then$$t_{\mathrm{mix}}(1/4)
\ge
\frac{1}{4|\V|} e^{\Delta_\star/\tau}.$$\end{thm}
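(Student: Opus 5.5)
The plan has three stages: bound the one-step escape probability pointwise, average that bound against $\mu_\tau$ to get the conductance, and then invoke the standard conductance lower bound on mixing time.

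\textbf{Step 1: pointwise escape bound.} Fix $x\in B$. By the definition of $P_\tau$,
\[
P_\tau(x,B^c)=\frac1n\sum_{i=1}^n \sum_{a:\,x^{(i\to a)}\notin B} p_{\theta,\tau}(a\mid x_{-i}).
\]
For a site $i$ where no single change leaves $B$, the inner sum is empty. Otherwise Assumption~\ref{ass:margin} applies. For each $a$ with $x^{(i\to a)}\notin B$ we have $a\neq x_i$, since $x\in B$. The softmax ratio then gives
\[
p_{\theta,\tau}(a\mid x_{-i})=\frac{e^{s_i(a;x_{-i})/\tau}}{\sum_b e^{s_i(b;x_{-i})/\tau}}\le e^{(s_i(a;x_{-i})-s_i(x_i;x_{-i}))/\tau}\le e^{-\Delta_\star/\tau},
\]
where the first inequality keeps only the term $b=x_i$ in the denominator. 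There are at most $|\V|-1$ such $a$, so each inner sum is at most $|\V|e^{-\Delta_\star/\tau}$. Averaging over $i$ gives the first claim. Note that only the margin inequality is used here; the argmax part of the assumption is not needed.

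\textbf{Step 2: conductance.} The bound from Step~1 is uniform in $x\in B$. The numerator of $\Phi_\tau(B)$ is therefore at most $|\V|e^{-\Delta_\star/\tau}\sum_{x\in B}\mu_\tau(x)=|\V|e^{-\Delta_\star/\tau}\mu_\tau(B)$. Dividing by $\mu_\tau(B)$ gives the conductance bound; we assume $\mu_\tau(B)>0$, which holds since $\mu_\tau$ has full support by irreducibility.

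\textbf{Step 3: mixing time.} We use the standard bottleneck bound $t_{\mathrm{mix}}(1/4)\ge 1/(4\Phi_\tau(B))$ for $\mu_\tau(B)\le 1/2$ (Levin--Peres--Wilmer, Thm.~7.4), and this is the step to be careful about. That theorem is usually stated for reversible chains, but its proof needs only stationarity, and stationarity is all we have here since Section~\ref{sec:incompatibility} shows no compatible distribution exists.

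The self-contained argument runs as follows. Start the chain from $\mu_B:=\mu_\tau(\cdot\mid B)$. By stationarity, the mass of $\mu_B P_\tau^{t}$ that has exited $B$ grows by at most $Q(B,B^c)/\mu_\tau(B)=\Phi_\tau(B)$ per step, where $Q(x,y):=\mu_\tau(x)P_\tau(x,y)$ is the edge flow. To see this, note $\mu_B\le\mu_\tau/\mu_\tau(B)$ pointwise, and this domination is preserved under $P_\tau$ because $\mu_\tau$ is stationary. Summing over steps gives $\|\mu_BP_\tau^t-\mu_B\|_{\mathrm{TV}}\le t\,\Phi_\tau(B)$.

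On the other hand, $\|\mu_B-\mu_\tau\|_{\mathrm{TV}}=1-\mu_\tau(B)\ge 1/2$. If $t\,\Phi_\tau(B)<1/4$, the triangle inequality gives $\|\mu_B P_\tau^t-\mu_\tau\|_{\mathrm{TV}}>1/4$. Since the worst-case distance over starting points dominates the distance from any mixture of starting points, the chain is not mixed at time $t$. Hence $t_{\mathrm{mix}}(1/4)\ge 1/(4\Phi_\tau(B))\ge e^{\Delta_\star/\tau}/(4|\V|)$.
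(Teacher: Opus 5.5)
Your proposal is correct and follows the paper's proof. The steps match: the same softmax estimate keeping only the $b=x_i$ term in the denominator, the same averaging against $\mu_\tau$ on $B$ for the conductance, and the same bottleneck bound $t_{\mathrm{mix}}(1/4)\ge 1/(4\Phi_\tau(B))$. The only difference is that the paper simply cites that last bound, while you derive it from stationarity alone. Your derivation is sound. Note that the domination $\mu_B P_\tau^t\le \mu_B$ on $B$ is also what identifies $\|\mu_B P_\tau^t-\mu_B\|_{\mathrm{TV}}$ with the escaped mass $\mu_B P_\tau^t(B^c)$. As far as I recall, the Levin--Peres--Wilmer bottleneck theorem is already stated without assuming reversibility.
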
\begin{proof}Fix $x\in B$. We first bound the probability of exiting $B$ in one step. A transition can leave $B$ only if the updated site $i$ is one whose change may exit $B$, and then one must sample a token $a\neq x_i$ such that $x^{(i\to a)}\notin B$.For such a site $i$, by the local margin assumption,$$s_i(a;x_{-i}) \le s_i(x_i;x_{-i})-\Delta_\star
\qquad\text{for all exit-causing } a.$$Therefore$$p_{\theta,\tau}(a\mid x_{-i})
=
\frac{e^{s_i(a;x_{-i})/\tau}}
{\sum_{b\in\V} e^{s_i(b;x_{-i})/\tau}}
\le
\frac{e^{(s_i(x_i;x_{-i})-\Delta_\star)/\tau}}{e^{s_i(x_i;x_{-i})/\tau}}
=
e^{-\Delta_\star/\tau}.$$Summing over at most $|\V|-1$ exit-causing symbols and over the choice of site yields$$P_\tau(x,B^c)
\le
\frac1n \sum_{i=1}^n (|\V|-1)e^{-\Delta_\star/\tau}
\le
|\V| e^{-\Delta_\star/\tau}.$$Now average with respect to $\mu_\tau$ restricted to $B$:$$\Phi_\tau(B)
=
\frac{1}{\mu_\tau(B)}\sum_{x\in B}\mu_\tau(x)P_\tau(x,B^c)
\le
|\V| e^{-\Delta_\star/\tau}.$$Finally, by the standard conductance lower bound on mixing time for finite Markov chains, if $\mu_\tau(B)\le 1/2$ then$$t_{\mathrm{mix}}(1/4) \ge \frac{1}{4\Phi_\tau(B)}.$$Substituting the conductance estimate gives$$t_{\mathrm{mix}}(1/4)
\ge
\frac{1}{4|\V|}e^{\Delta_\star/\tau}.$$\end{proof} 


\end{document}